\begin{document}
	\title{Evaluating Machine Unlearning via Epistemic Uncertainty}
	
\author{Alexander Becker \and Thomas Liebig}
\authorrunning{A. Becker, T. Liebig}
\institute{TU Dortmund University, 44227 Dortmund, Germany \email{\{alexander2.becker,thomas.liebig\}@tu-dortmund.de}}
	
	\maketitle
	
	\begin{abstract}
		There has been a growing interest in Machine Unlearning recently, primarily due to legal requirements such as the General Data Protection Regulation (GDPR) and the California Consumer Privacy Act. Thus, multiple approaches were presented to remove the influence of specific target data points from a trained model. However, when evaluating the success of unlearning, current approaches either use adversarial attacks or compare their results to the optimal solution, which usually incorporates retraining from scratch. We argue that both ways are insufficient in practice. In this work, we present an evaluation metric for Machine Unlearning algorithms based on epistemic uncertainty. This is the first definition of a general evaluation metric for Machine Unlearning to our best knowledge.
		
		\keywords{Machine Unlearning  \and Forgetting \and Evaluation \and Trustworthy ML \and Data Protection.}
	\end{abstract}
	
	\section{Introduction and Related Work}
	In the last years, there has been an increasing interest in the domain of Machine Unlearning. Several approaches were presented, which apply to different models and come with different assumptions and restrictions. So far, these approaches cover unlearning in decision trees and random forests \cite{brophy2021machine,schelter2021hedgecut}, linear models such as logistic regression \cite{aldaghri2021coded,golatkar2020eternal,guo2019certified}, neural networks \cite{bourtoule2021machine,golatkar2021mixed,golatkar2020eternal,graves2020amnesiac,guo2019certified} and even Markov Chain Monte Carlo \cite{fu2022knowledge,nguyen2022markov}. While most approaches focus on forgetting\footnote{We use the terms unlearning and forgetting interchangeably.} in a single model, there also exist works that deal with federated models instead \cite{wang2021federated}. The vast majority of Machine Unlearning algorithms are motivated by legal requirements such as the General Data Protection Regulation (GDPR) or the California Consumer Privacy Act. However, it is worth mentioning that Machine Unlearning is not limited to this use case. In \cite{liu2022backdoor}, Liu et al., for instance, utilize forgetting in order to remove backdoors that were induced into a model. Since the field of Machine Unlearning is rather young, we believe that applications in many other domains will likely arise soon, e.g., model revision, continual learning and bias correction.
	
	When it comes to evaluation, the existing approaches can be divided into three categories. First, there are those approaches that provably guarantee perfect unlearning \cite{aldaghri2021coded,bourtoule2021machine,brophy2021machine,chen2022recommendation,golatkar2020eternal,neel2021descent,ullah2021machine}, and therefore do not need any evaluation. However, they often come with strong assumptions, making them only applicable in some specific scenarios. Others like the SISA (Sharded, Isolated, Sliced, Aggregated) training approach \cite{bourtoule2021machine} are generally applicable but come with considerable performance losses compared to a regularly trained model. Second, several approaches give guarantees on how well the forgetting works \cite{fu2022knowledge,gupta2021adaptive,neel2021descent,sekhari2021remember}. All of them argue about guarantees in the sense of differential privacy, i.e., the scrubbed model cannot be distinguished from a retrained model up to a certain probability. In the context of Machine Unlearning, this property is usually referred to as certified removal and was first introduced by Guo et al. \cite{guo2019certified}. Just as for the perfect forgetting approaches, they often come with strong assumptions.
	
	Finally, we have those approaches that are neither perfect nor give guarantees \cite{golatkar2021mixed,golatkar2020eternal,graves2020amnesiac,liu2022backdoor,schelter2021hedgecut,wang2022efficiently,wang2021federated}, but evaluate the success of unlearning purely empirical or compare the resulting model with an actual retrained model. The latter might be interesting from a theoretical point of view but is inapplicable in practice. However, measuring the success of forgetting purely empirical might give us a good hint whether unlearning works. A general approach for empirically evaluating the success of an unlearning algorithm is through adversarial attacks. Evaluation solely through adversarial attacks, however, comes with some clear disadvantages.
	On the one hand, performing different attacks is rather expensive since it requires training the corresponding attack models. Concretely, if the costs for forgetting \textit{and} evaluation exceed the costs for retraining from scratch, then the evaluation is not suitable in practice. On the other hand, we know that a failed attack is in no case a guarantee that unlearning really removed the information about the sensitive data. An attack might also fail due to other aspects such as badly chosen hyper-parameters, poorly estimated model parameters or the type and structure of the attack model. So to say, a failed attack on the scrubbed model can be seen as necessary but not as sufficient. Another point is that there might be new attacks in the future that can successfully gain sensitive information from the model, even though none of the currently existing attacks were able to do so. Nonetheless, the major advantage of those approaches is that they only come with few or even no assumptions, which makes them generally applicable, e.g., for deep neural networks.
	
	Given all of the concerns mentioned above, we utter the necessity of an evaluation metric that captures how well an unlearning algorithm removed the information about the sensitive data points and, at the same time, is theoretically profound and efficient enough for practical use.
	
	The contributions of this work are the following:
	\begin{enumerate}
		\item The definition of an evaluation metric that measures the success of Machine Unlearning algorithms. This metric is based on epistemic uncertainty and also allows an information theoretical interpretation.
		
		\item We give a theoretical upper bound for the metric that is more efficient to compute than the actual metric, which is particularly important for larger models and datasets.
		
		\item In the experiments, we investigate our evaluation metric on three forgetting approaches and discuss that it is useful for those approaches that change the model parameters in the direction of the retrained model.
	\end{enumerate}

	The rest of this paper is structured as follows. In \Cref{sec:preliminaries}, we will give a short introduction to the unlearning algorithms used in this work. We will then present our evaluation metric in \Cref{sec:metric}, including its intuition, derivation, and theoretical upper bound. Afterward, we outline the experimental setup and state the results in \Cref{sec:experiments}. Finally, we discuss the results and give possible research directions for future work in \Cref{sec:conclusion}.
	
	\section{Unlearning Algorithms} \label{sec:preliminaries}
	In the following, we give a short overview of the forgetting approaches used in this work.
	For this we assume an already trained model with parameters $\theta$ trained on a dataset $D = D_r \cup D_f$ with $D_f \cap D_r = \emptyset$. $D_f$ states the target data that should be forgotten and $D_r$ states the remaining data.
	
	\subsection{Retraining}\label{sec:retraining}
	In many works \cite{aldaghri2021coded,bourtoule2021machine,fu2022knowledge,golatkar2020eternal,guo2019certified,gupta2021adaptive,neel2021descent,wang2021federated}, retraining the model from scratch on the remaining data $D_r$ is considered the optimal solution, since it does not use the sensitive data points $D_f$ during training and achieves high performance on $D_r$. However, for large models and datasets retraining from scratch is computationally expensive, which is why this is often considered impractical. The fact that retraining is considered the optimal solution is deeply connected to the idea of certified removal (CR) \cite{guo2019certified} or differential privacy \cite{dwork2014algorithmic}, respectively. A forgetting algorithm $\mathcal{U}$ is considered an $\epsilon$-CR \cite{guo2019certified}, iff $\forall \mathcal{T} \subseteq \mathcal{H}, D \subseteq \mathcal{X}$ and $x \in D$:
	
	\begin{equation} \label{eq:ecr}
		e^{- \epsilon} \leq \frac{P(\mathcal{U} ( \mathcal{A}(D), D, x) \in \mathcal{T})}{P(\mathcal{A}(D \setminus \{x\}) \in \mathcal{T})} \leq e^\epsilon,
	\end{equation}

	where $\mathcal{A}$ is considered a learning algorithm, $\mathcal{H}$ the hypothesis space, $\mathcal{X}$ the data space, and $x$ an arbitrary data point from our dataset $D$. \Cref{eq:ecr} states that for any hypothesis space, dataset, and data point, the chance of obtaining the same result via forgetting and retraining should be equal with a tolerated margin of $\epsilon$.
	
	\subsection{Amnesiac Unlearning}
	Amnesiac Unlearning \cite{graves2020amnesiac} is an imperfect forgetting algorithm without guarantees that removes the influence of the target data points by actually reverting all parameter updates they are related to. To implement this, it is necessary to keep track of the parameter updates $\Delta_{\theta_{e, b}}$ in all training epochs $e$ and for each batch $b$. Additionally, the set of data points $D_{e,b}$ relating to the updates must be provided. Given the parameter updates and the related data points, Amnesiac Unlearning subtracts all the updates $\Delta_{\theta_{e, b}}$ from the model parameters, where $D_{e,b}$ and the target data points $D_f$ have at least one data point in common:

	\begin{equation}
		\mathcal{U}_{AU}(\theta, D_f) = \theta - \sum_{e=1}^{E}\sum_{b=1}^{B} \mathbb{1}[ D_f \cap D_{e, b} \neq \emptyset ]  \Delta_{\theta_{e, b}}.
	\end{equation}

	After forgetting, the updates that were subtracted and the corresponding data points no longer need to be held up.
	Also note that for Amnesiac Unlearning iterative and batch forgetting are equivalent, since $\forall D_f, D'_f \subseteq D$:
	
	\begin{equation} \label{eq:amnesiac_equiv}
		\mathcal{U}_{AU}(\mathcal{U}_{AU}(\theta, D_f), D'_f) = \mathcal{U}_{AU}(\theta, D_f \cup D'_f ).
	\end{equation}

	\subsection{Fisher Forgetting}
	Fisher Forgetting \cite{golatkar2020eternal} follows a different approach by hiding the difference between the given model and a model that could have been obtained by retraining from scratch. For this, Fisher Forgetting assumes both models to be close already, such that a normal distribution can describe the difference between them with variance $\sigma_h^2$. The goal is to add normal distributed noise to the parameters to hide this difference. The choice of the covariance matrix for this normal distributed noise is the key element of this approach. Here, we will only give its definition and a short explanation. For further details, we refer to the original work by Golatkar et al. \cite{golatkar2020eternal}.
	Fisher Forgetting is defined as
	\begin{equation} \label{eq:fisher_forgetting}
		\mathcal{U}_{FF}(\theta, D_f) = \theta + \alpha^{\frac{1}{4}}F^{-\frac{1}{4}} n,
	\end{equation}
	where $n \sim \mathcal{N}(0, I)$ is standard normal distributed noise, $F$ is the approximated Fisher Information matrix (see \Cref{eq:fim} below) of the model w.r.t. the remaining data and $\alpha = \lambda \sigma_h^2$. The hyper-parameter $\lambda$ trades off the loss on the remaining data and the difference between the scrubbed and a retrained model. This trade-off directly originates from their Forgetting Lagrangian \cite{golatkar2020eternal}. $\sigma_h^2$ is also treated as a hyper-parameter because it is generally unknown and cannot be computed efficiently. Since $\lambda$ and $\sigma_h^2$ always occur together as a multiplied factor, they are combined into a single hyper-parameter $\alpha$. Just as Amnesiac Unlearning, Fisher Forgetting is an imperfect unlearning approach without any guarantees. However, it is worth mentioning that it originates from generalizing a perfect forgetting approach, namely Optimal Quadratic Scrubbing \cite{golatkar2020eternal}.
	
	\section{Measuring the Success of Forgetting} \label{sec:metric}
	When it comes to evaluating Machine Unlearning algorithms, there are three aspects that are of importance.
	\begin{enumerate}
		\item The scrubbed model should contain as little information as possible about the target data $D_f$. (Efficacy)
		
		\item The scrubbed model should still perform well on the remaining data $D_r$. (Fidelity)
		
		\item The computation of the scrubbed model should be more efficient than retraining the prior model from scratch on the remaining data $D_r$. (Efficiency)
	\end{enumerate}
	These three aspects form the foundation for evaluating Machine Unlearning algorithms and are widely agreed on as they are stated explicitly or implicitly in many works in the domain \cite{bourtoule2021machine,brophy2021machine,chen2022recommendation,fu2022knowledge,golatkar2021mixed,golatkar2020eternal,graves2020amnesiac,schelter2021hedgecut,sekhari2021remember,ullah2021machine,wang2021federated}. Here, we make use of the terminology as stated by Warnecke et al. in \cite{warnecke2021machine}.
	
	Instead of evaluating all three aspects with a single metric, we argue that it is more reasonable to evaluate them separately. While this might seem obvious for the efficiency, it is not that clear for the efficacy and the fidelity. For example, Golatkar et al. formalize the unlearning problem by their Forgetting Lagrangian \cite{golatkar2020eternal}, which incorporates both the performance on the remaining data and the difference between the scrubbed and the retrained model. However, the efficacy and the fidelity should strictly be evaluated separately. Otherwise, it would be possible to receive a better evaluation by only improving the performance on the remaining data without scrubbing any information about the target data. Thus, a metric that combines both aspects might be misleading in its interpretation.
	Since there are plenty of ways to evaluate the fidelity of a model, e.g. by accuracy, we will focus on evaluating the efficacy in this work.

	\subsection{Evaluating Forgetting via Epistemic Uncertainty}
	
	Our evaluation metric is both motivated by information theory and epistemic uncertainty. Since the efficacy informally describes the goal to minimize residual information about the target data, we want to start by giving a more precise formalization of the term of information in this context. We think that the most suitable notion of information is that of the Fisher Information matrix (FIM) \cite{schervish2012theory}
	
	\begin{equation}\label{eq:fim}
		\mathcal{I}(\theta; D) = \mathbb{E}_\theta \left[ \sum_{x,y \in D} \frac{-\partial^2 \log p_\theta(y | x)}{\partial \theta \partial \theta^T} \right] \; ,
	\end{equation}
		
	since it already measures the amount of information the model parameters $\theta$ carry about the dataset $D$. For practical reasons, we further assume an empirical approximation of the FIM, namely the Levenberg-Marquart approximation \cite{martens2014new}
	\begin{equation}\label{eq:fim_approx}
		\mathcal{I}(\theta; D) \simeq \frac{1}{|D|} \sum_{x, y \in D} \left(\frac{\partial \log p_\theta(y|x)}{\partial \theta}\right)^2.
	\end{equation}
	Approximating the FIM is often necessary in practice, since the computation of the second derivatives w.r.t. all parameter combinations and for each data point separately is computationally expensive, i.e. $\mathcal{O}(|D| \cdot F + |D| \cdot G \cdot |\theta|^2)$. Here $F$ and $G$ denote the runtime complexities for the model inference $p_\theta(y|x)$ and the gradient computations, respectively. The FIM might not even fit in memory for larger models such as deep neural networks. Keeping in mind that forgetting and its evaluation must be more efficient than retraining from scratch, computing the whole FIM is not reasonable.
	
	Note that the approximation in  \Cref{eq:fim_approx}, which can be done in $\mathcal{O}(|D| \cdot F +  |D| \cdot G \cdot |\theta|)$, will give us a diagonal matrix or a vector of length $|\theta|$, respectively. For all $i \in [1, |\theta|]$ the $i$-th entry states the amount of information $\theta_i$ carries about the dataset $D$. By computing the trace of $\mathcal{I}(\theta; D)$, we obtain an overall information value
	
	\begin{equation}\label{eq:uncertainty}
		\imath(\theta; D) = tr(\mathcal{I}(\theta; D)) =  \frac{1}{|D|} \sum_{i = 1}^{|\theta|} \sum_{x, y \in D} \left(\frac{\partial \log p_\theta(y|x)}{\partial \theta_i}\right)^2.
	\end{equation}
	
	The trace of the FIM can not only be seen as the total amount of information, but also as the epistemic uncertainty of $\theta$ with respect to $D$ \cite{hullermeier2021aleatoric}. Thus, $\imath(\theta; D)$ can be interpreted in the following ways:
	\begin{enumerate}
		\item The amount of information the model parameters carry about the given dataset.
		\item The degree of how much the model parameters can vary, while still describing the dataset equally well.
		\item The epistemic uncertainty of the model parameters with respect to the dataset.
	\end{enumerate}
	
	Given the information respectively uncertainty score in \Cref{eq:uncertainty}, we see that it is also sufficient to compute the FIM diagonal only since none of the other matrix entries are required. We argue that scrubbing information from a model should always increase the epistemic uncertainty and decrease the amount of information about the target data. Otherwise, a Streisand effect \cite{jansen2015streisand} may occur. Therefore, we introduce the efficacy score
	\begin{equation}\label{eq:efficacy}
		\text{efficacy}(\theta; D) = \begin{cases}
			\frac{1}{\imath(\theta; D)}, &\text{if } \imath(\theta; D) > 0 \\
			\infty, &\text{otherwise}
		\end{cases}
	\end{equation}
	as a measure for how much information the model exposes. The model converges to the optimum of 0 with increasing uncertainty.
	However, we argue that in most cases, even exact unlearning algorithms or retraining will not achieve an efficacy score of (approximately) 0 due to the generalization from the remaining data points. Although it is hard to tell what the actual optimum efficacy is, the efficacy score is still helpful to compare the results of multiple unlearning algorithms since we still know that the lower the score, the higher the uncertainty about the target data.
	
	With an increasing number of target data points and model parameters, the efficacy score becomes more computationally expensive since it requires the computation of the gradients w.r.t. the model parameters for each target data point separately. In the following, we address this problem by defining a theoretical upper bound for the efficacy score. For this, we first define the cross-entropy loss
	\begin{equation}
		\mathcal{L}(\theta, D) = \frac{1}{|D|} \sum_{x, y \in D} -\log p_\theta(y | x)
	\end{equation}
	with gradients
	\begin{equation}
		\nabla \mathcal{L}(\theta, D) = \frac{1}{|D|} \sum_{x, y \in D} \frac{-\partial \log p_\theta(y | x)}{\partial \theta}
	\end{equation}
	and $\ell_2$-norm of the gradients
	\begin{equation}
		\| \nabla \mathcal{L}(\theta, D) \|_2 = \left(\sum_{\theta_i \in \theta} \frac{1}{|D|^2} \left(\sum_{x, y \in D} \frac{\partial \log p_\theta(y | x)}{\partial \theta_i}\right)^2\right)^{\frac{1}{2}}.
	\end{equation}
	
	Then,  \Cref{th:uncertainty_lower_bound} shows that the uncertainty is lower bounded by the squared gradient norm of the cross-entropy loss.
	
	\begin{theorem}\label{th:uncertainty_lower_bound}
		Let $\mathcal{L}(\theta, D)$ be the cross-entropy loss.
		The squared gradient norm of the cross-entropy loss  forms a lower bound for the information score:
		\begin{equation}
			\| \nabla \mathcal{L}(\theta, D) \|_2^2 \leq \imath(\theta; D)
		\end{equation}
	\end{theorem}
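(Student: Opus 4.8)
The plan is to reduce the inequality to a term-by-term application of the Cauchy--Schwarz inequality, after first lining up the two expressions so that the comparison becomes transparent. Writing out both sides using \Cref{eq:uncertainty} and the definition of the squared gradient norm, the essential difference is a single factor of $|D|$: the information score carries a prefactor $\tfrac{1}{|D|}$ and sums the \emph{squares} of the per-sample derivatives, whereas the squared gradient norm carries a prefactor $\tfrac{1}{|D|^2}$ and squares the \emph{sum} of the per-sample derivatives. A small but necessary first observation is that the leading minus sign in $\nabla\mathcal{L}(\theta,D)$ is irrelevant here, since both sides involve only squares.

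First I would fix a coordinate $i \in [1,|\theta|]$ and abbreviate the per-sample derivatives $g_{i,x,y} = \frac{\partial \log p_\theta(y | x)}{\partial \theta_i}$. The claim then reduces, for each $i$, to showing
\begin{equation}
	\frac{1}{|D|^2}\left(\sum_{x,y \in D} g_{i,x,y}\right)^2 \leq \frac{1}{|D|}\sum_{x,y \in D} g_{i,x,y}^2 .
\end{equation}
Clearing denominators, this is exactly the estimate $\bigl(\sum_{x,y} g_{i,x,y}\bigr)^2 \leq |D| \sum_{x,y} g_{i,x,y}^2$, which is the Cauchy--Schwarz inequality applied to the vector $(g_{i,x,y})_{x,y}$ paired with the all-ones vector (equivalently, the statement that the empirical variance of the $g_{i,x,y}$ over $D$ is non-negative). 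Summing this per-coordinate inequality over all $i \in [1,|\theta|]$ recovers exactly $\|\nabla\mathcal{L}(\theta,D)\|_2^2 \leq \imath(\theta;D)$, which is the claim.

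There is essentially no hard step here: once the two quantities are written side by side, the result is a direct consequence of Cauchy--Schwarz together with correct bookkeeping of the $1/|D|$ versus $1/|D|^2$ normalizations. The only point requiring a little care is to verify that the squared gradient norm groups the summation over data points \emph{inside} the square (it is the gradient of the averaged loss) while the information score keeps the per-sample squares separate; this is precisely what makes the single factor of $|D|$ the exact slack in the inequality, with equality holding exactly when the per-sample gradients $g_{i,x,y}$ are identical across all $(x,y)\in D$ for every coordinate $i$.
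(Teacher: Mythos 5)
Your proof is correct and is essentially the same argument as the paper's: both reduce the claim to the per-coordinate inequality $\bigl(\sum_{x,y} g_{i,x,y}\bigr)^2 \leq |D| \sum_{x,y} g_{i,x,y}^2$, which is Cauchy--Schwarz against the all-ones vector, and then sum over coordinates. Your added remarks on the irrelevance of the sign of $\nabla\mathcal{L}$ and on the equality case are accurate but do not change the route.
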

	
	\begin{proof}[\Cref{th:uncertainty_lower_bound}]
		\begin{align*}
			\imath(\theta; D) &= \frac{1}{|D|} \sum_{\theta_i \in \theta} \sum_{x,y \in D} \left(\frac{\partial \log p_\theta(y|x)}{\partial \theta_i}\right)^2  & \text{(\Cref{eq:uncertainty})}\\
			&\geq \frac{1}{|D|} \sum_{\theta_i \in \theta} \frac{1}{|D|} \left(\sum_{x,y \in D} \frac{\partial \log p_\theta(y|x)}{\partial \theta_i}\right)^2 & \text{(*)}\\
			&= \sum_{\theta_i \in \theta} \frac{1}{|D|^2} \left(\sum_{x,y \in D} \frac{\partial \log p_\theta(y|x)}{\partial \theta_i}\right)^2 \\
			&= \| \nabla \mathcal{L}(\theta, D) \|_2^2
		\end{align*}
		(*) Cauchy-Schwarz inequality \cite{steele2004cauchy}
	\end{proof}

	From the uncertainty lower bound in \Cref{th:uncertainty_lower_bound}, we directly obtain the theoretical upper bound for the efficacy score (\Cref{lem:efficacy_upper_bound}).
	
	\begin{lemma}\label{lem:efficacy_upper_bound}
		Let $\mathcal{L}(\theta, D)$ be the cross-entropy loss.
		If $\imath(\theta; D) > 0$, then the efficacy score is upper bounded by
		\begin{equation}
			\textnormal{efficacy}(\theta; D) \leq \frac{1}{\| \nabla \mathcal{L}(\theta, D) \|_2^2}.
		\end{equation}
	\end{lemma}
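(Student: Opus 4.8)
The plan is to treat this as an immediate corollary of \Cref{th:uncertainty_lower_bound}, since by hypothesis $\imath(\theta; D) > 0$ the efficacy score falls into the first branch of its definition in \Cref{eq:efficacy}, namely $\text{efficacy}(\theta; D) = 1/\imath(\theta; D)$. The entire argument therefore reduces to transporting the lower bound on $\imath(\theta; D)$ into an upper bound on its reciprocal.

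First I would invoke \Cref{th:uncertainty_lower_bound} to obtain $\| \nabla \mathcal{L}(\theta, D) \|_2^2 \leq \imath(\theta; D)$. The key step is then to apply the map $x \mapsto 1/x$, which is strictly decreasing on $(0, \infty)$, so that the inequality reverses. Concretely, provided $\| \nabla \mathcal{L}(\theta, D) \|_2^2 > 0$, combining $0 < \| \nabla \mathcal{L}(\theta, D) \|_2^2 \leq \imath(\theta; D)$ with monotonicity of the reciprocal yields
\begin{equation*}
	\text{efficacy}(\theta; D) = \frac{1}{\imath(\theta; D)} \leq \frac{1}{\| \nabla \mathcal{L}(\theta, D) \|_2^2},
\end{equation*}
which is exactly the claimed bound.

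The only point requiring a moment's care is the degenerate case $\| \nabla \mathcal{L}(\theta, D) \|_2^2 = 0$, where the right-hand side is formally $1/0$. Here I would simply note that the bound holds trivially: the right-hand side is understood as $+\infty$, while the left-hand side $1/\imath(\theta; D)$ is finite because $\imath(\theta; D) > 0$. I do not expect any genuine obstacle in this proof; the substance lives entirely in \Cref{th:uncertainty_lower_bound}, and the present statement is a one-line consequence of the positivity assumption together with the monotonicity of inversion on the positive reals. The hardest part is merely to state the edge case cleanly rather than to overcome any real difficulty.
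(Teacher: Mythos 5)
Your proposal is correct and matches the paper's reasoning exactly: the paper derives \Cref{lem:efficacy_upper_bound} directly from \Cref{th:uncertainty_lower_bound} by taking reciprocals under the positivity hypothesis $\imath(\theta; D) > 0$, which is precisely your argument. Your explicit treatment of the degenerate case $\| \nabla \mathcal{L}(\theta, D) \|_2^2 = 0$ (interpreting the bound as $+\infty$) is a careful addition the paper leaves implicit, but it does not constitute a different route.
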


	In contrast to the efficacy score, its upper bound is far less computationally expensive since it only requires computing the gradients w.r.t. the parameters once, instead of $|D|$ times. Thus, the efficacy upper bound can be computed in $\mathcal{O}(|D| \cdot F + G \cdot |\theta|)$ in contrast to $\mathcal{O}(|D| \cdot F + |D| \cdot G \cdot |\theta|)$ for the efficacy.
	
	\section{Experiments} \label{sec:experiments}
	Our experiments aim to study the practical usefulness of the efficacy score and its theoretical upper bound stated in \Cref{lem:efficacy_upper_bound}. For this, we formulate the following two hypotheses:
	\begin{enumerate}
		\item Applying a forgetting algorithm to a trained model always reduces the efficacy score for the target data.
		\item A lower efficacy score always implies a less successful adversarial attack.
	\end{enumerate}
	In order to test the above hypotheses, we consider the following experimental setup.
	
	\subsection{Experimental Setup}
	We define a simple neural network architecture, consisting of five fully connected layers with $N$, 512, 256, 128, and 10 neurons per layer, respectively. $N$ denotes the input dimension that varies depending on the input data. All layers but the output layer are ReLU activated. We use the softmax function for the last layer to obtain an output distribution over all classes. For the datasets, we use the well-known MNIST \cite{lecun1998gradient} and CIFAR10 \cite{krizhevsky2009learning} datasets for image classification. We only use a small subset of both datasets to reduce the applied forgetting algorithms' runtime and memory consumption. Thus, we only use the first 100 samples for each class. All training examples are transformed to greyscale images, and pixel values are normalized to $[0, 1]$.
	
	We train the model on the MNIST dataset for 50 epochs with a learning rate of 0.1 and a batch size of 32 using standard stochastic gradient descent (SGD). For CIFAR10, we train the model for 200 epochs with a learning rate of 0.1 and a batch size of 64, again using standard SGD. Each training is repeated 20 times with different random initializations, giving us a total number of 40 pre-trained models.
	For the target data we want to forget, we chose a fixed class for both datasets: Class 3 for MNIST and class 8 for CIFAR10. Both classes are chosen arbitrarily. Furthermore, we define a percentage value for how many data points of the target class we would like to forget. In all experiments the percentage values are 0.01, 0.1, 0.25, 0.5, 0.8 and 1, respectively.
	
	As a baseline for how successful an adversarial attack should be at most after forgetting, we perform a membership inference attack (MIA) \cite{shokri2017membership} on all 40 pre-trained models for each of the above percentage values. This leads to a total number of 240 model attacks. The goal of a MIA is to predict whether a data point is part of the training dataset of the target model. Therefore, the attack model only has black-box access to the target model. Here we make use of the MIA implementation in IBM's Adversarial Robustness Toolbox \cite{art2018}. Each attack model is trained with default parameters on the same training set as its target model.
	
	We consider three different approaches for forgetting algorithms: Retraining, Fisher Forgetting, and Amnesiac Unlearning. Thereby, retraining is considered the optimal solution (see \Cref{sec:retraining}). All forgetting approaches are applied to each pre-trained model using the same target classes and percentage values as the MIAs. This again leads to a total of applying each forgetting approach 240 times. Retraining is always performed with the same model initialization and hyper-parameters as the original training. Our implementation of Fisher Forgetting exactly follows the implementation details stated in the extended version of \cite{golatkar2020eternal}, which slightly differs from the definition of Fisher Forgetting given in \Cref{eq:fisher_forgetting}. The implementation details are purely empirical and arose from the experiments in which these led to better results.
	
	Finally, we want to note that all our experiments were implemented using PyTorch \cite{paszke2019pytorch} and are publicly available on GitHub\footnote{\url{https://github.com/ROYALBEFF/evaluating\_machine\_unlearning\_via\_epistemic\_uncertainty}}.
	
	\subsection{Results}
	In the following, we present our experimental results on the MNIST dataset. We omit our experimental results on CIFAR10, since they do not provide any additional insights. The observation for both datasets are identical. \footnote{Plots and tables of the CIFAR10 experiments are included in the appendix.}
	
	First of all, we take a look at the efficacy values of the pre-trained models in comparison to the efficacy after forgetting (\Cref{fig:efficacy}). We observe that the efficacy increases on average for the pre-trained models and varies more with the fewer data points we take into account. For a single data point of the target class, i.e., the percentage equals 0.01, the observed efficacy values range from $10^{-7}$ to $10^{13}$, which exceeds the lowest and highest values for all other percentages. Note that for reasons of readability, the efficacy values for a single data point are omitted in \Cref{fig:efficacy}.
	Altogether training increases the efficacy of the target data, as can be seen in \Cref{fig:efficafy_comparison}.

	\begin{figure}
		\centering
		\begin{subfigure}[b]{0.5\textwidth}
			\centering
			\includegraphics[scale=0.43]{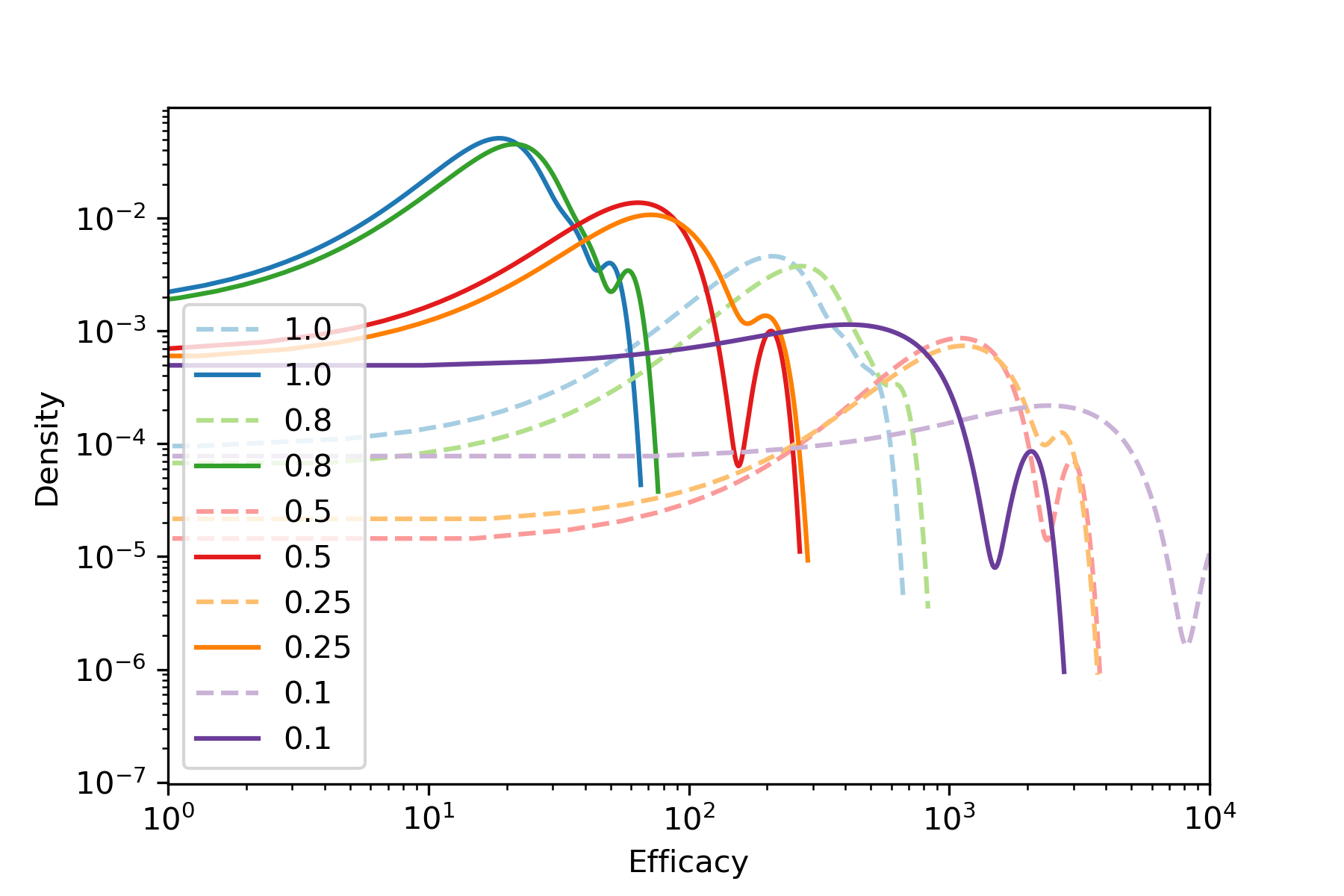}
			\caption{Pre-trained}
		\end{subfigure}\hfill
		\begin{subfigure}[b]{0.5\textwidth}
			\centering
			\includegraphics[scale=0.43]{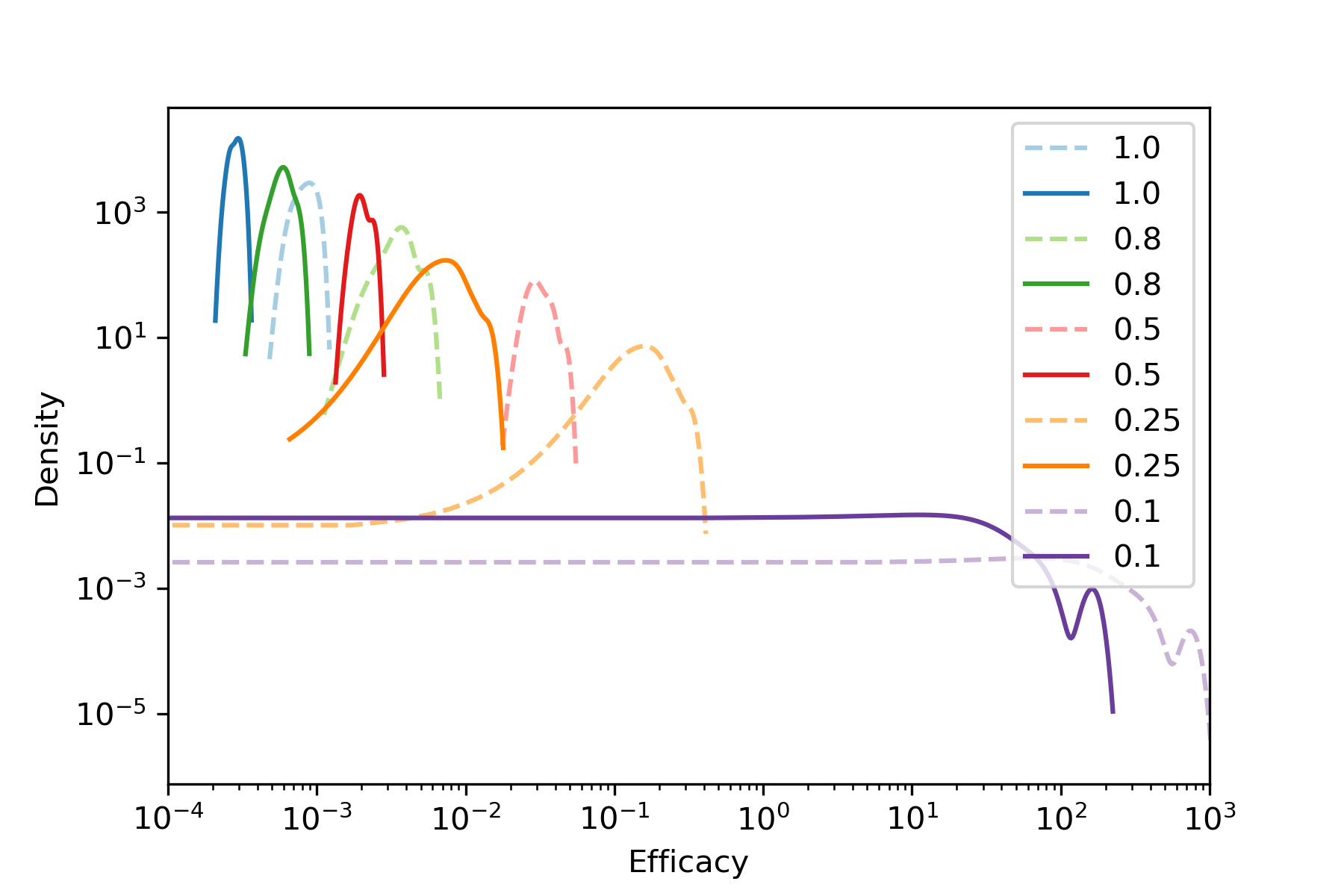}
			\caption{Retraining}
		\end{subfigure}
		\begin{subfigure}[b]{0.5\textwidth}
			\centering
			\includegraphics[scale=0.43]{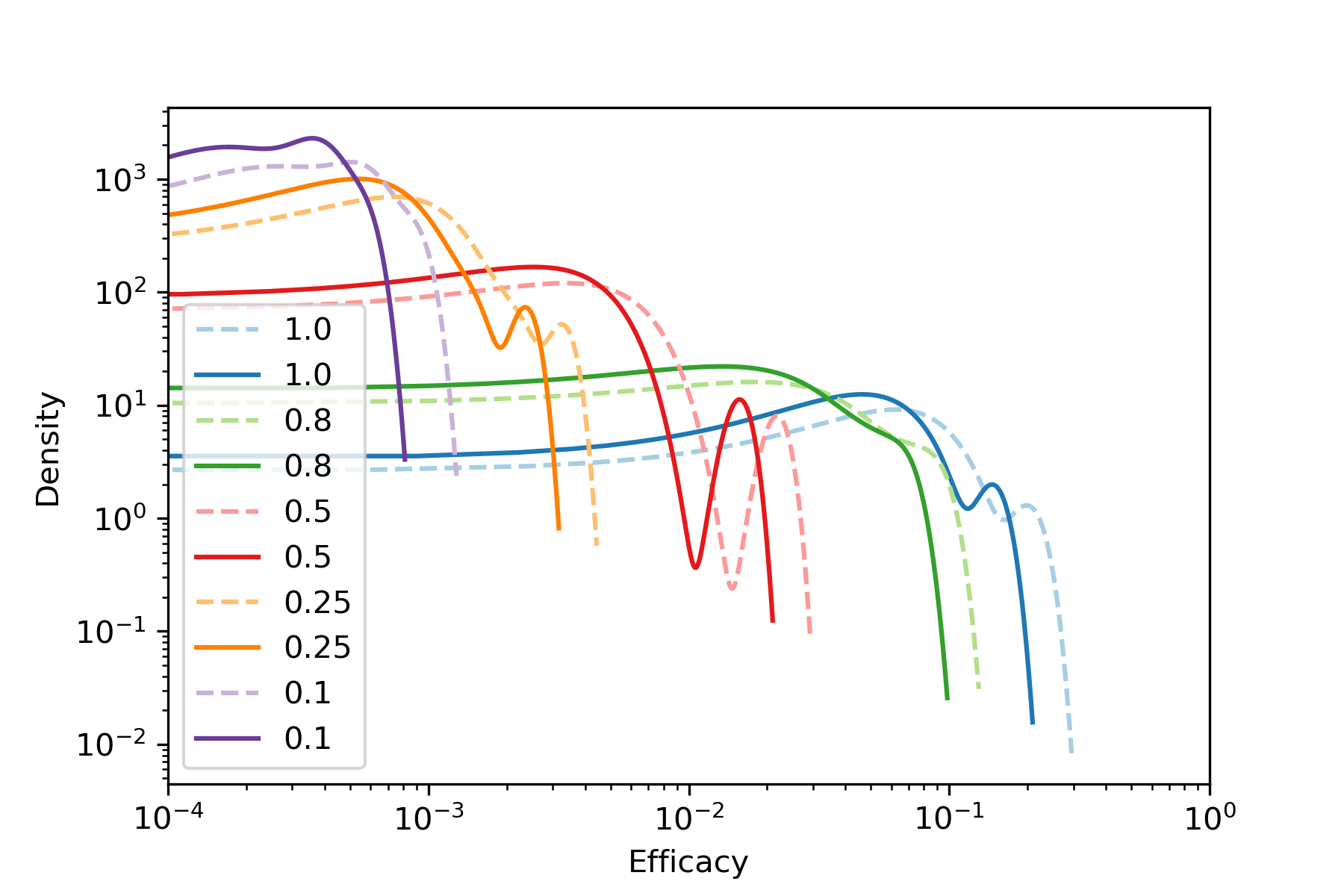}
			\caption{Amnesiac Unlearning}
			\label{fig:efficacy_amnesiac}
		\end{subfigure}\hfill
		\begin{subfigure}[b]{0.5\textwidth}
			\centering
			\includegraphics[scale=0.43]{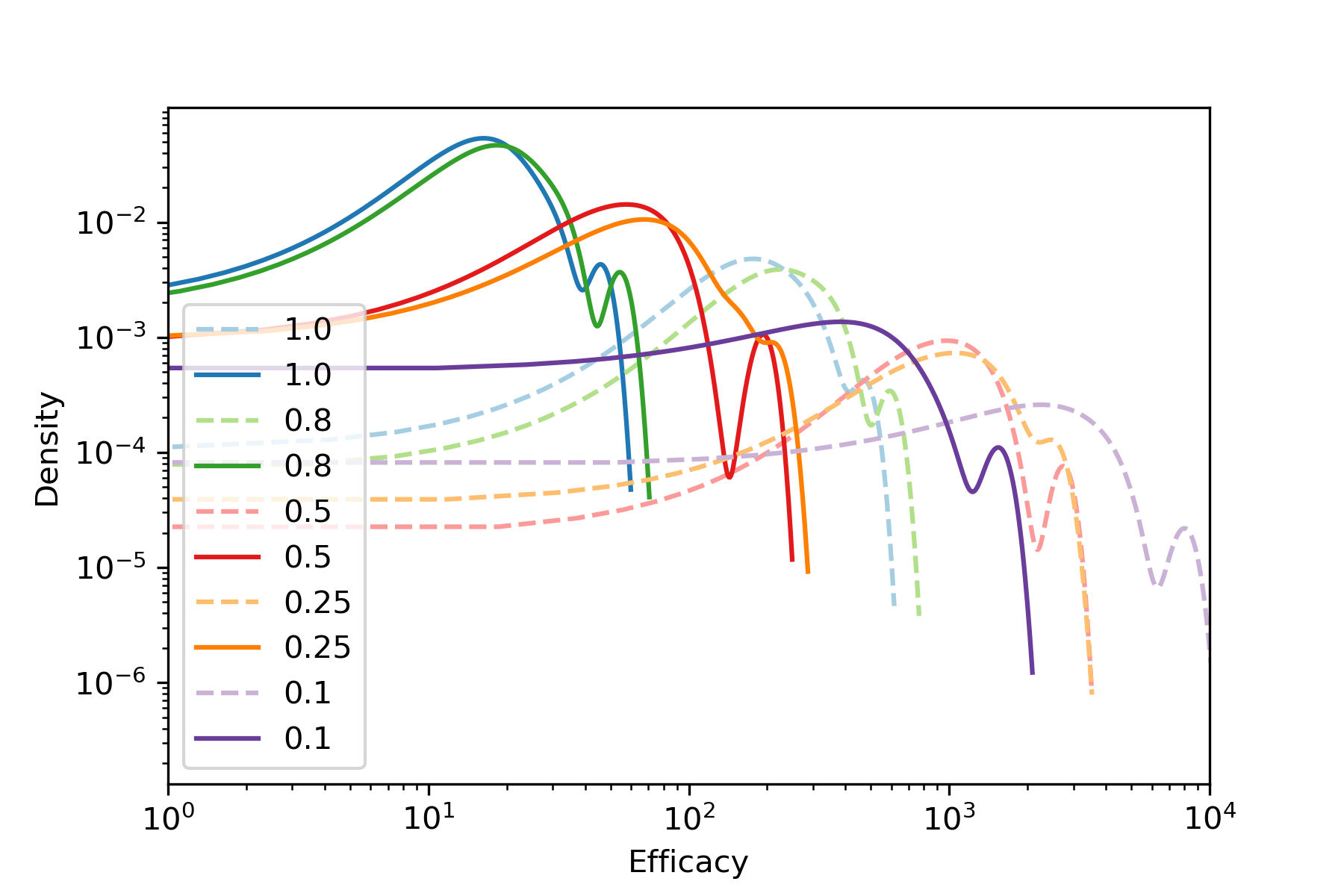}
			\caption{Fisher Forgetting}
			\label{fig:efficacy_fisher}
		\end{subfigure}
		\caption{Distributions of efficacy scores (solid lines) and upper bounds (dashed lines) over all pre-trained models trained on the MNIST dataset (a) before and (b)-(d) after forgetting. Each distribution corresponds to a percentage of the target class. For reasons of readability we omit the percentage of 0.01. Both axes are log scaled.}
		\label{fig:efficacy}
	\end{figure}

	\begin{figure}
		\centering
		\includegraphics[scale=0.6]{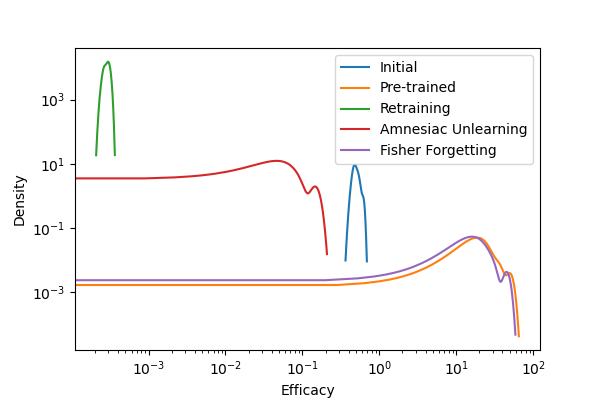}
		\caption{Efficacy comparison w.r.t. the whole target class before training (Initial), after training (Pre-trained) and after forgetting (Retraining, Amnesiac Unlearning, Fisher Forgetting). Both axes are log scaled.}
		\label{fig:efficafy_comparison}
	\end{figure}
	
	Forgetting about the target data decreases the efficacy in all our experiments, except for forgetting a single data point only. Here the efficacy almost remains unchanged, and in the case of Amnesiac Unlearning, we were even able to observe an increase over the pre-trained models. When forgetting more than just a single data point, we see that the efficacy continually decreases the more data we forget from our target class. This holds for both retraining from scratch and Fisher Forgetting. Interestingly enough, we observe the opposite behavior for Amnesiac Unlearning.
	In comparison to the pre-trained model, the efficacy values decrease, but a higher percentage yields an increase in the efficacy (\Cref{fig:efficacy_amnesiac}). Amnesiac Unlearning forgets about target data points by reverting those updates from training, which they directly influence, so this behavior is expected. With an increasing number of target data points, the number of updates that must be reverted increases as well. Therefore, the model further develops in the direction of the initial model. As a consequence, the efficacy values converge towards those of the initial model as well (see \Cref{fig:efficacy_amnesiac,fig:efficafy_comparison}). Finally, we see that Fisher Forgetting only slightly decreases the efficacy independent of the target data percentage. This holds for all values of $\alpha$ (see \Cref{eq:fisher_forgetting}) we tried in our experiments and can be traced back to clipping the values in the Fisher approximation, which is one of the implementation details mentioned in \cite{golatkar2020eternal}. This also leads to the fact that the scrubbed model can still classify all target data points correctly, which is undesirable in general. At the same time, we observe that the accuracy of the target data points decreases for the other approaches the higher the percentage (\Cref{tab:acc}).
	
	For the efficacy upper bounds (\Cref{fig:efficacy}), we observe that both the distribution shapes and the relations between the different distributions are preserved. This shows that the efficacy upper bound is suitable for comparing multiple forgetting results even though the absolute values are an order of magnitude larger than the actual efficacies.
	
	\begingroup
	\setlength{\tabcolsep}{10pt}
	\begin{table}
		\centering
		\begin{tabular}{c c c c c}
			\toprule
			\multirow{2}{*}{Model} & \multirow{2}{*}{$p$} & \multicolumn{3}{c}{Accuracy} \\
			&& $D_r$ & $D_f$ & $D_{test}$ \\ \midrule
			Pre-trained & * & 1.00 ± 0.00 & 1.00 ± 0.00 & 0.87 ± 0.00 \\ \midrule
			\multirow{6}{*}{Retraining} & 1 & 1.00 ± 0.00 & 0.00 ± 0.00 & 0.80 ± 0.00 \\
			& 0.8 & 1.00 ± 0.00 & 0.57 ± 0.05 & 0.84 ± 0.00 \\
			& 0.5 & 1.00 ± 0.00 & 0.89 ± 0.02 & 0.86 ± 0.00 \\ 
			& 0.25 & 1.00 ± 0.00 & 0.96 ± 0.01 & 0.87 ± 0.00 \\ 
			& 0.1 & 1.00 ± 0.00 & 1.00 ± 0.00 & 0.87 ± 0.00 \\ 
			& 0.01 & 1.00 ± 0.00 & 1.00 ± 0.00 & 0.87 ± 0.00 \\ \midrule
			\multirow{6}{*}{Amnesiac Unlearning} & 1 & 0.13 ± 0.03 & 0.00 ± 0.00 & 0.11 ± 0.02 \\
			& 0.8 & 0.12 ± 0.03 & 0.00 ± 0.00 & 0.11 ± 0.02 \\ 
			& 0.5 & 0.11 ± 0.01 & 0.00 ± 0.00 & 0.10 ± 0.01 \\ 
			& 0.25 & 0.16 ± 0.07 & 0.00 ± 0.00 & 0.15 ± 0.07 \\ 
			& 0.1 & 0.19 ± 0.08 & 0.00 ± 0.00 & 0.18 ± 0.07 \\
			& 0.01 & 0.60 ± 0.21 & 0.50 ± 0.50 & 0.53 ± 0.18 \\ \midrule
			Fisher Forgetting & * & 1.00 ± 0.00 & 1.00 ± 0.00 & 0.87 ± 0.00 \\ \toprule
		\end{tabular}

		\caption{Mean accuracy and standard deviation of all MNIST models on the remaining data $D_r$, the target data $D_f$ and the test data $D_{test}$. $p$ denotes the percentage of the target data and $*$ indicates that the accuracy values are the same over all percentages.}
		\label{tab:acc}
	\end{table}
	\endgroup
	
	Next, we take a look at the mean probabilities of the membership inference attacks performed on the pre-trained and the scrubbed models as shown in \Cref{fig:mean_mia}.
	We compute the mean probability over all target data points for each attack. The effectiveness of the MIA on the pre-trained model thereby forms the baseline, which is not to be exceeded. Otherwise, forgetting would expose information about the target data rather than removing it. On the other hand, we have a second baseline, which is given by the effectiveness of the MIA on the retrained model since the target data points were not used during training. However, due to correlations and generalization, an attack might still be successful, especially if not all data points of the target class were to be forgotten. 
	Our results show that forgetting always decreases the mean MIA probabilities and that the more data points we forget, the lower they get. Here we want to point out that this also holds for Amnesiac Unlearning, even though it increases the efficacy with increasing target data points.
	
	\begin{figure}[ht]
		\centering
		\begin{subfigure}[b]{0.5\textwidth}
			\centering
			\includegraphics[scale=0.43]{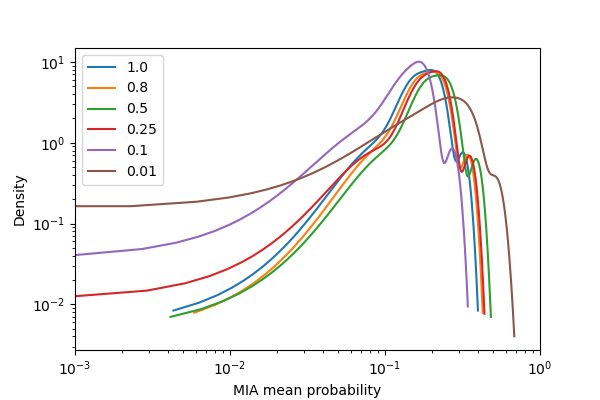}	
			\caption{Pre-trained}
		\end{subfigure}\hfill
		\begin{subfigure}[b]{0.5\textwidth}
			\centering
			\includegraphics[scale=0.43]{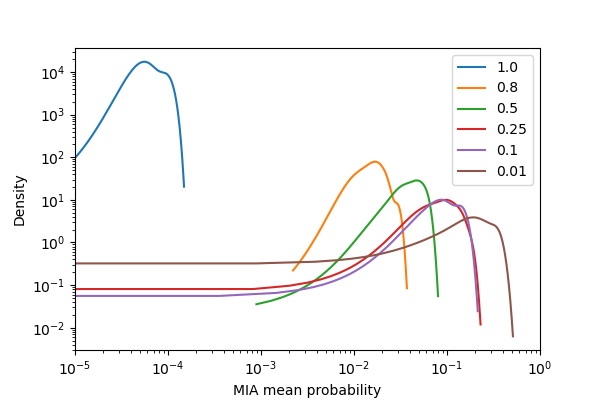}
			\caption{Retraining}
		\end{subfigure}
		\begin{subfigure}[b]{0.5\textwidth}
			\centering
			\includegraphics[scale=0.43]{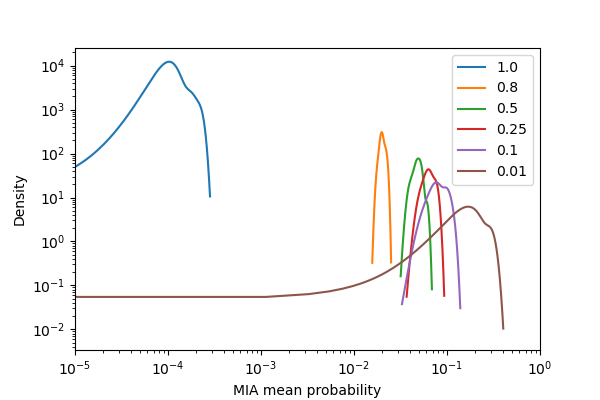}
			\caption{Amnesiac Unlearning}
		\end{subfigure}\hfill
		\begin{subfigure}[b]{0.5\textwidth}
			\centering
			\includegraphics[scale=0.43]{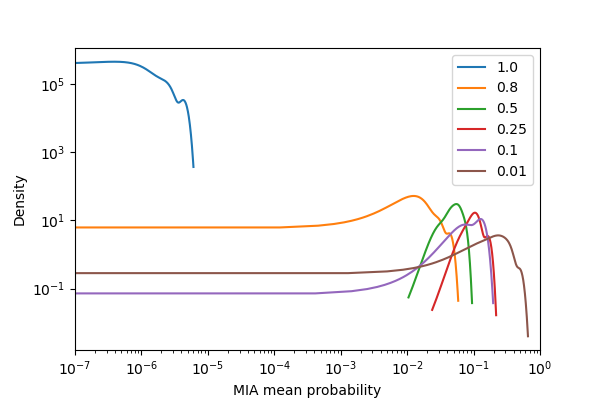}
			\caption{Fisher Forgetting}
		\end{subfigure}
		\caption{Distributions of membership inference attack mean probabilities over all pre-trained models trained on the MNIST dataset (a) before and (b)-(d) after forgetting. Each distribution corresponds to a percentage of the target class. Both axes are log scaled.}
		\label{fig:mean_mia}
	\end{figure}
	
	In \Cref{fig:efficacy_mia}, we illustrate how efficacy relates to the effectiveness of the MIAs. The efficacy values and the MIA mean probabilities do not vary much for the pre-trained model, even though the efficacy values are much larger when computed for a single data point only. 
	For retraining and Fisher Forgetting, we observe that with an increased efficacy score, the mean probabilities of the MIAs also become larger. Just as for the efficacy values themselves, we observe the opposite effect for forgetting via Amnesiac Unlearning. Here lower efficacies yield higher average MIA probabilities. Again, this can be traced back to the model converging to the initial model rather than to the pre-trained model as more data points are forgotten. Also, note that the efficacy varies over a large range of values for single data points while having similar MIA mean probabilities.
	
	\begin{figure}[ht]
		\centering
		\begin{subfigure}[b]{0.5\textwidth}
			\centering
			\includegraphics[scale=0.43]{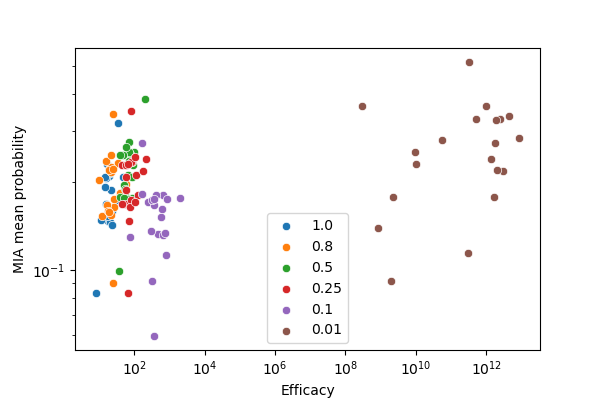}
			\caption{Pre-trained}
		\end{subfigure}\hfill
		\begin{subfigure}[b]{0.5\textwidth}
			\centering
			\includegraphics[scale=0.43]{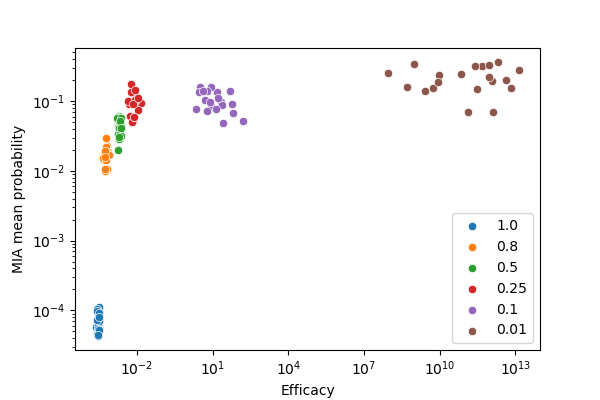}
			\caption{Retraining}
		\end{subfigure}
		\begin{subfigure}[b]{0.5\textwidth}
			\centering
			\includegraphics[scale=0.43]{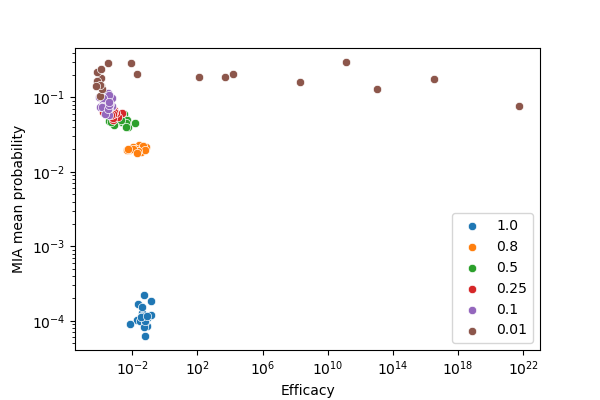}
			\caption{Amnesiac Unlearning}
		\end{subfigure}\hfill
		\begin{subfigure}[b]{0.5\textwidth}
			\centering
			\includegraphics[scale=0.43]{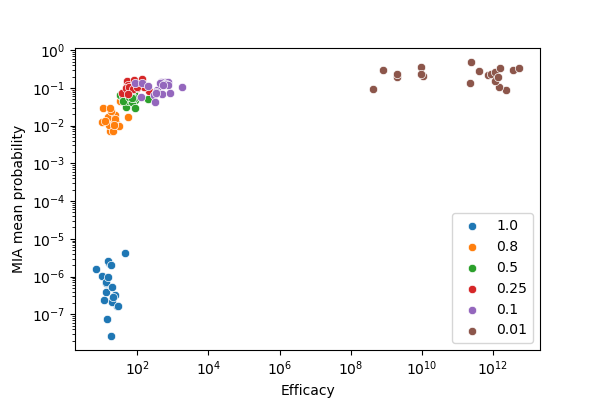}
			\caption{Fisher Forgetting}
		\end{subfigure}
		\caption{Log-log plot showing the relation between the efficacy and the membership inference attack mean probability (a) before and (b)-(d) after forgetting.}
		\label{fig:efficacy_mia}
	\end{figure}
	
	With the results given above, we revisit our hypotheses. Even though most experiments give evidence for the first hypothesis, we have to reject it due to our experiments on Amnesiac Unlearning. Here, forgetting first decreases the efficacy, but the more data points we forget, the more the efficacy increases. This is especially problematic since there is no difference between batch forgetting and iterative forgetting for Amnesiac Unlearning (\Cref{eq:amnesiac_equiv}). Thus, forgetting all target points consecutively increases the efficacy. Likewise, we have to reject our second hypothesis. Our results from forgetting through retraining and Fisher Forgetting support this hypothesis. However, the results from forgetting through Amnesiac Unlearning show the exact opposite behaviour.
	
	\section{Conclusion and Future Work} \label{sec:conclusion}
	 In this work, we presented a metric for evaluating the success of forgetting algorithms and formulated two hypotheses, which we tested in our experiments. This metric has many advantages compared to existing evaluation methods since it can be computed efficiently, allows an information theoretical interpretation, does not require retraining the model, and follows cognitive considerations regarding the relationship between uncertainty and forgetting. Moreover, for its theoretical upper bound, it even preserves the relation of the efficacy distributions over all percentage values. All of those advantages are crucial for practical applications.
	 Our results clearly show that the evaluation of forgetting algorithms is a complex task that cannot simply be solved by measuring accuracy or performing adversarial attacks. Decreasing the accuracy of the target data points and preventing adversarial attacks from being successful can be seen as necessary conditions but are not sufficient to guarantee that sensitive information was really scrubbed from the model. The former can be observed in our experiments using Fisher Forgetting (\Cref{tab:acc}, \Cref{fig:mean_mia}), while the latter can easily be illustrated using a small example. Consider a neural network consisting of a feature extraction part, followed by a classifier part. Reinitializing the classifier part would degenerate the model's performance. However, since the feature extraction part has not changed, the overall model still contains information about the training data.
	 
	 Even though we had to reject both hypotheses, we see that both retraining from scratch as the optimal solution and Fisher Forgetting give evidence for them. Merely our results of using Amnesiac Unlearning lead to a rejection of the hypotheses. As a crucial difference of the forgetting algorithms leading to this rejection, we identified the direction in which forgetting changes the model parameters. While retraining is considered the optimal solution and Fisher Forgetting aims to blur the difference between the model and the retrained model by adding noise, Amnesiac Unlearning lets the model converge towards the initial model before training (\Cref{fig:efficafy_comparison,fig:efficacy_amnesiac}). Thus, the directions in which the parameters are changed are fundamentally different. This implies that multiple evaluation metrics might be necessary that depend on the way the algorithms remove information from models. Therefore, we claim that the here presented efficacy metric is a first step in the direction of more general evaluation metrics for Machine Unlearning.
	 
	 Given the insights from this work, we want to do a larger survey on evaluating Machine Unlearning algorithms in the future, where we categorize the algorithms depending on how and in which direction the model parameters are updated. In this context, we will further  study the here presented efficacy metric for those algorithms that aim to obtain a model close to a retrained one, since the results of our experiments look quite promising. Finally, finding the relation between the efficacy and certified removal is also an important direction for future work, since this will allow relating the metric to privacy guarantees in the sense of differential privacy.
	
	\section*{Acknowledgment}
	This work is supported by the Federal Ministry of Education and Research of Germany as part of the competence center for machine learning ML2R (01–S18038A).
	
	\bibliographystyle{splncs04}
	\bibliography{literature}

\newpage
\section*{Appendix}
Results of experiments on CIFAR10.

\begin{figure}
	\centering
	\begin{subfigure}[b]{0.5\textwidth}
		\centering
		\includegraphics[scale=0.43]{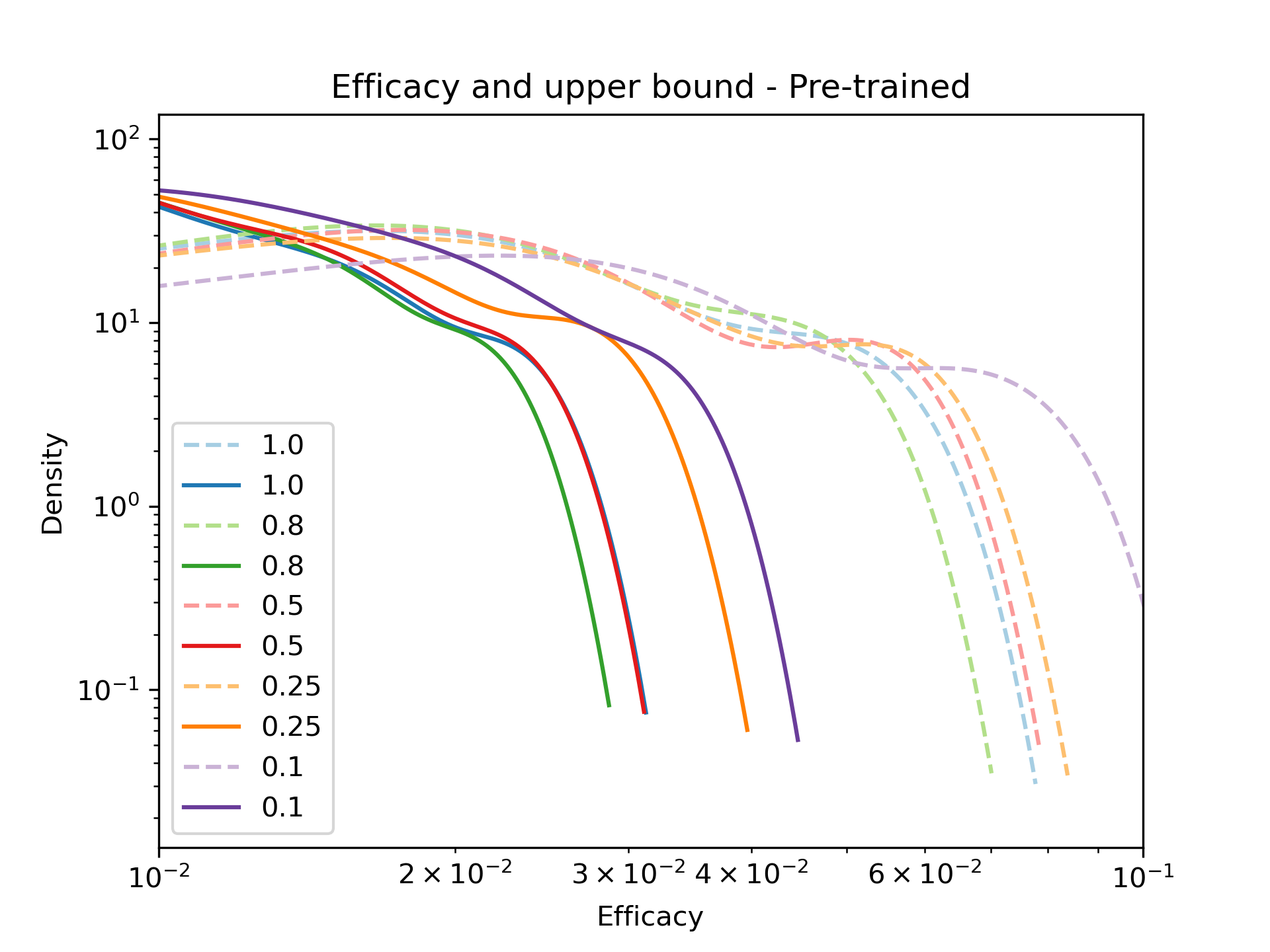}
		\caption{Pre-trained}
	\end{subfigure}\hfill
	\begin{subfigure}[b]{0.5\textwidth}
		\centering
		\includegraphics[scale=0.43]{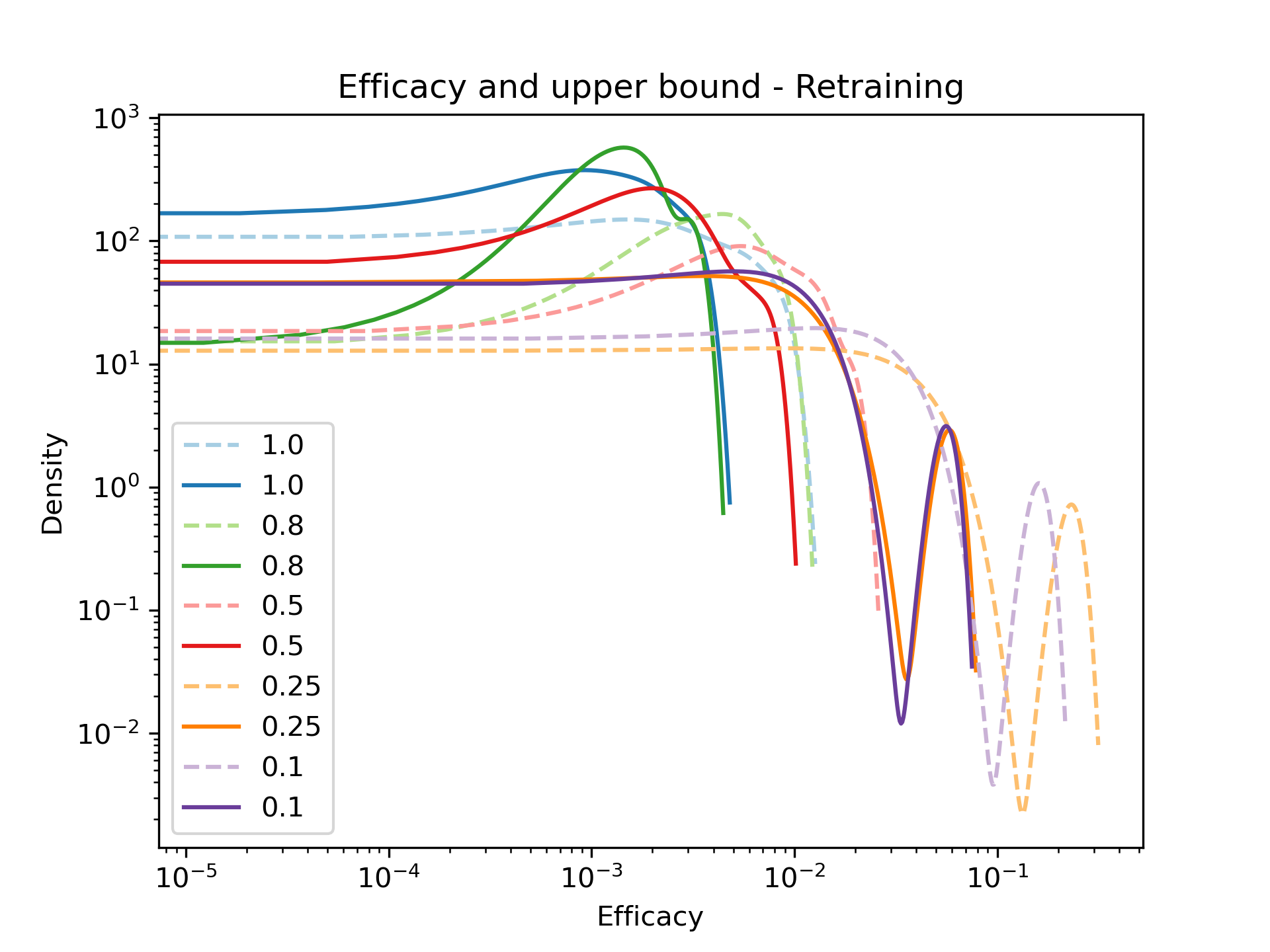}
		\caption{Retraining}
	\end{subfigure}
	\begin{subfigure}[b]{0.5\textwidth}
		\centering
		\includegraphics[scale=0.43]{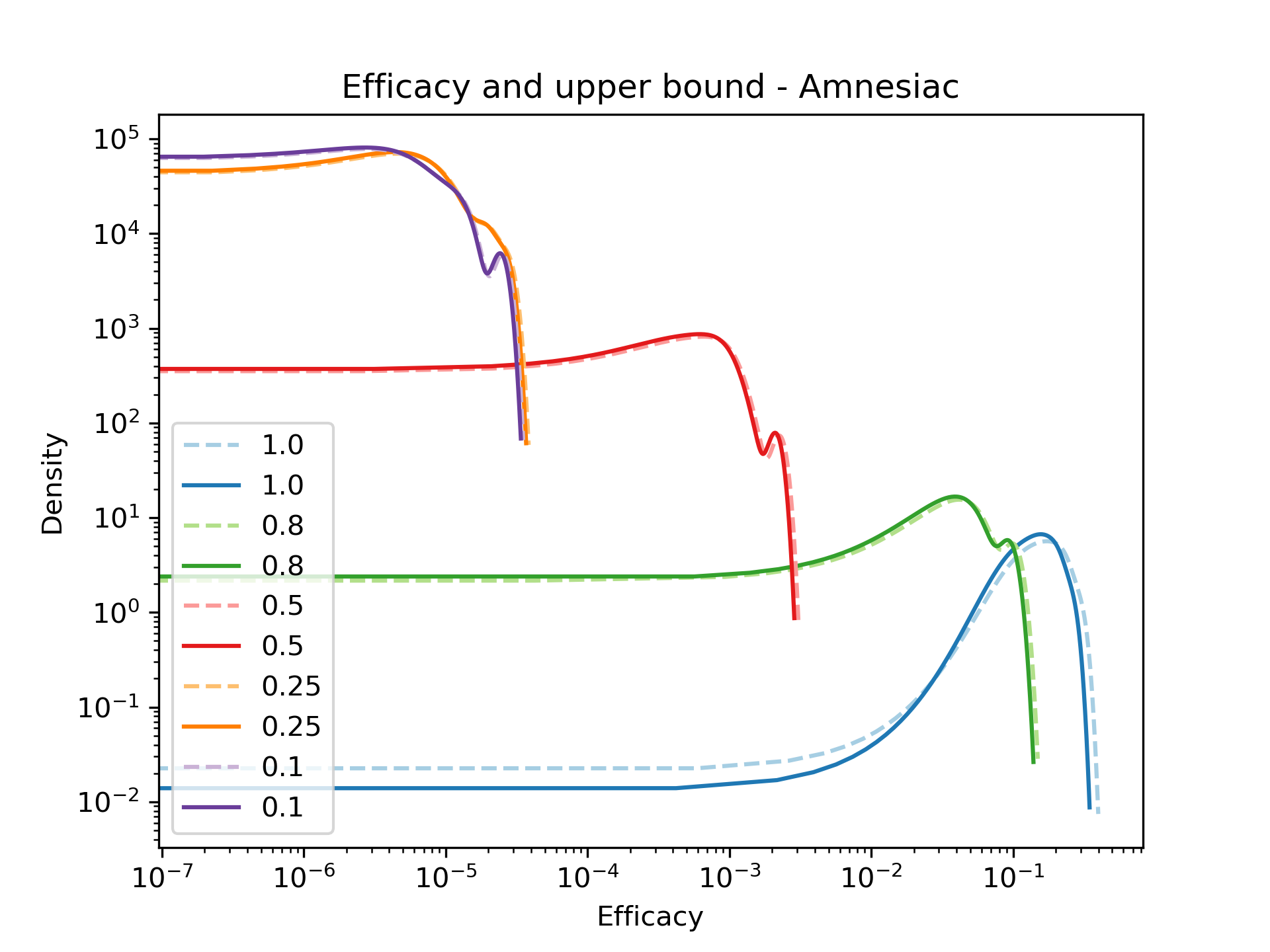}
		\caption{Amnesiac Unlearning}
	\end{subfigure}\hfill
	\begin{subfigure}[b]{0.5\textwidth}
		\centering
		\includegraphics[scale=0.43]{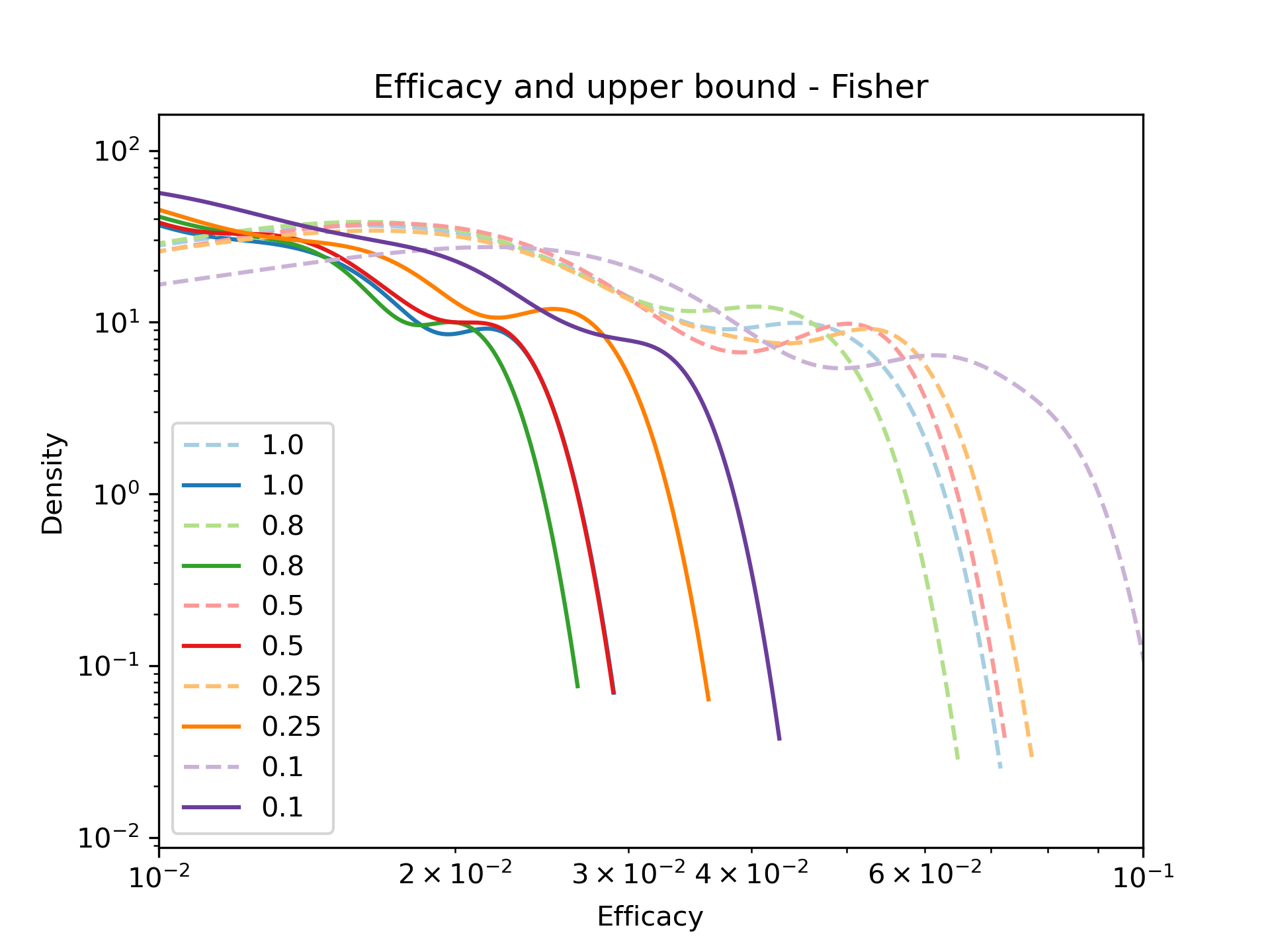}
		\caption{Fisher Forgetting}
		\label{fig:efficacy_fisher}
	\end{subfigure}
	\caption{Distributions of efficacy scores (solid lines) and upper bounds (dashed lines) over all pre-trained models trained on the CIFAR10 dataset (a) before and (b)-(d) after forgetting. Each distribution corresponds to a percentage of the target class. For reasons of readability we omit the percentage of 0.01. Both axes are log scaled.}
\end{figure}

\begin{figure}
	\centering
	\includegraphics[scale=0.6]{img/efficacy_comparison.png}
	\caption{Efficacy comparison w.r.t. the whole target class before training (Initial), after training (Pre-trained) and after forgetting (Retraining, Amnesiac Unlearning, Fisher Forgetting). Both axes are log scaled.}
\end{figure}

\begingroup
\setlength{\tabcolsep}{10pt}
\begin{table}
	\centering
	\begin{tabular}{c c c c c}
		\toprule
		\multirow{2}{*}{Model} & \multirow{2}{*}{$p$} & \multicolumn{3}{c}{Accuracy} \\
		&& $D_r$ & $D_f$ & $D_{test}$ \\ \midrule
		\multirow{6}{*}{Pre-trained} & 1 & 0.55 ± 0.10 & 0.65 ± 0.21 & 0.26 ± 0.02 \\
		& 0.8 & 0.55 ± 0.09 & 0.64 ± 0.21 & 0.26 ± 0.02 \\
		& 0.5 & 0.55 ± 0.09 & 0.66 ± 0.22 & 0.26 ± 0.02 \\
		& 0.25 & 0.56 ± 0.08 & 0.68 ± 0.23 & 0.26 ± 0.02 \\
		& 0.1 & 0.56 ± 0.08 & 0.68 ± 0.20 & 0.26 ± 0.02 \\
		& 0.01 & 0.56 ± 0.08 & 0.95 ± 0.22 & 0.26 ± 0.02 \\ \midrule			
		\multirow{6}{*}{Retraining} & 1 & 0.24 ± 0.08 & 0.00 ± 0.00 & 0.17 ± 0.04 \\
		& 0.8 & 0.48 ± 0.10 & 0.04 ± 0.07 & 0.25 ± 0.03 \\
		& 0.5 & 0.57 ± 0.07 & 0.20 ± 0.17 & 0.27 ± 0.02 \\
		& 0.25 & 0.44 ± 0.07 & 0.31 ± 0.26 & 0.24 ± 0.02 \\
		& 0.1 & 0.54 ± 0.09 & 0.48 ± 0.24 & 0.25 ± 0.03 \\
		& 0.01 & 0.57 ± 0.07 & 0.95 ± 0.22 & 0.26 ± 0.02 \\ \midrule
		\multirow{6}{*}{Amnesiac Unlearning} & 1 & 0.11 ± 0.01 & 0.00 ± 0.00 & 0.10 ± 0.01 \\
		& 0.8 & 0.11 ± 0.01 & 0.00 ± 0.00 & 0.10 ± 0.00 \\
		& 0.5 & 0.11 ± 0.00 & 0.00 ± 0.00 & 0.10 ± 0.00 \\
		& 0.25 & 0.11 ± 0.01 & 0.00 ± 0.00 & 0.10 ± 0.01 \\
		& 0.1 & 0.10 ± 0.00 & 0.00 ± 0.00 & 0.10 ± 0.00 \\
		& 0.01 & 0.11 ± 0.01 & 0.00 ± 0.00 & 0.10 ± 0.01 \\ \midrule			
		\multirow{6}{*}{Fisher Forgetting} & 1 & 0.55 ± 0.10 & 0.65 ± 0.22 & 0.26 ± 0.02 \\
		& 0.8 & 0.55 ± 0.09 & 0.64 ± 0.22 & 0.26 ± 0.02 \\
		& 0.5 & 0.55 ± 0.09 & 0.65 ± 0.22 & 0.26 ± 0.02 \\
		& 0.25 & 0.55 ± 0.08 & 0.68 ± 0.24 & 0.26 ± 0.02 \\
		& 0.1 & 0.56 ± 0.08 & 0.69 ± 0.21 & 0.26 ± 0.02 \\
		& 0.01 & 0.56 ± 0.08 & 0.95 ± 0.22 & 0.26 ± 0.02 \\ \toprule
	\end{tabular}
	
	\caption{Mean accuracy and standard deviation of all CIFAR10 models on the remaining data $D_r$, the target data $D_f$ and the test data $D_{test}$. $p$ denotes the percentage of the target data and $*$ indicates that the accuracy values are the same over all percentages.}
	\label{tab:acc}
\end{table}
\endgroup

\begin{figure}[ht]
	\centering
	\begin{subfigure}[b]{0.5\textwidth}
		\centering
		\includegraphics[scale=0.43]{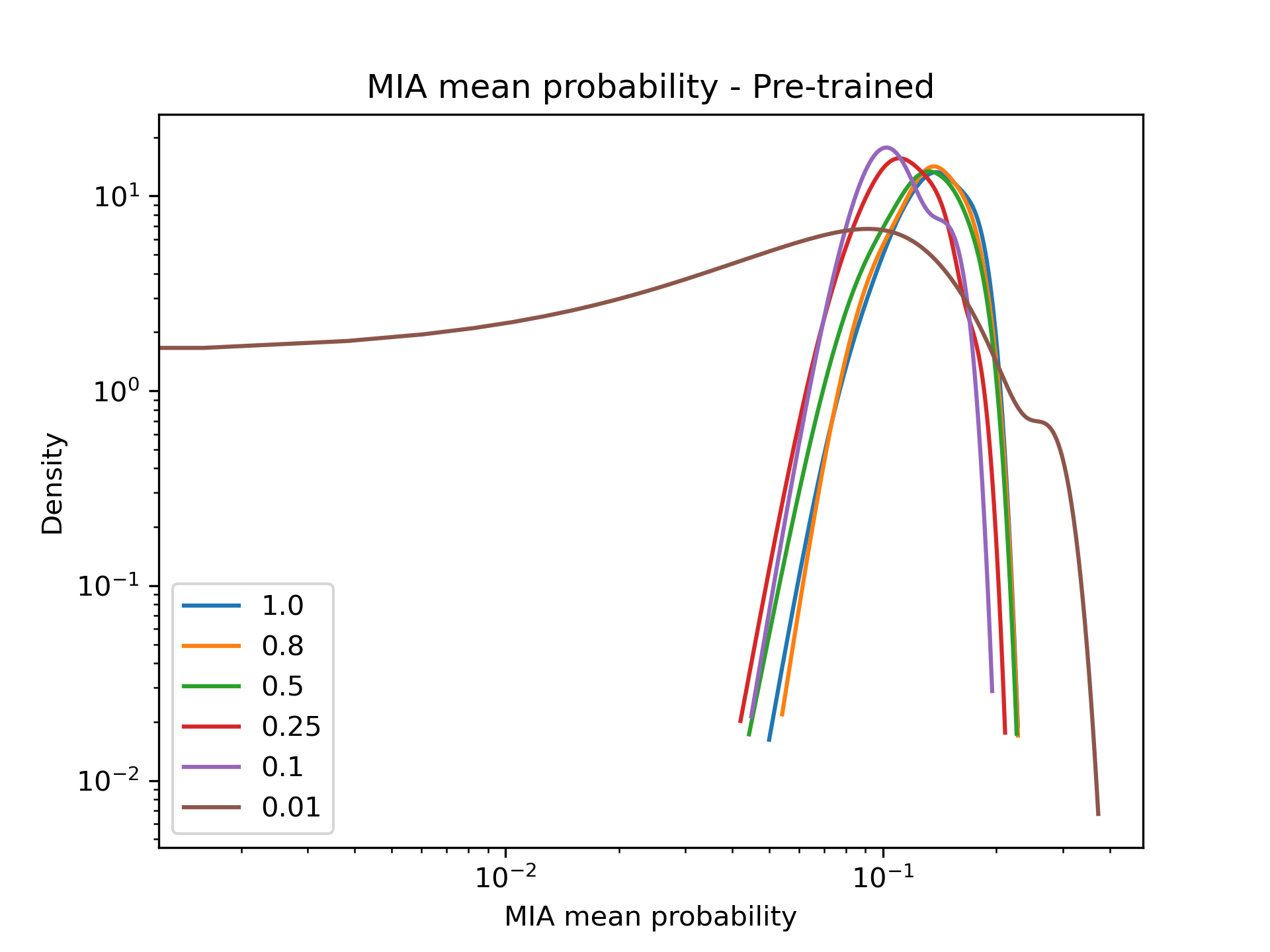}	
		\caption{Pre-trained}
	\end{subfigure}\hfill
	\begin{subfigure}[b]{0.5\textwidth}
		\centering
		\includegraphics[scale=0.43]{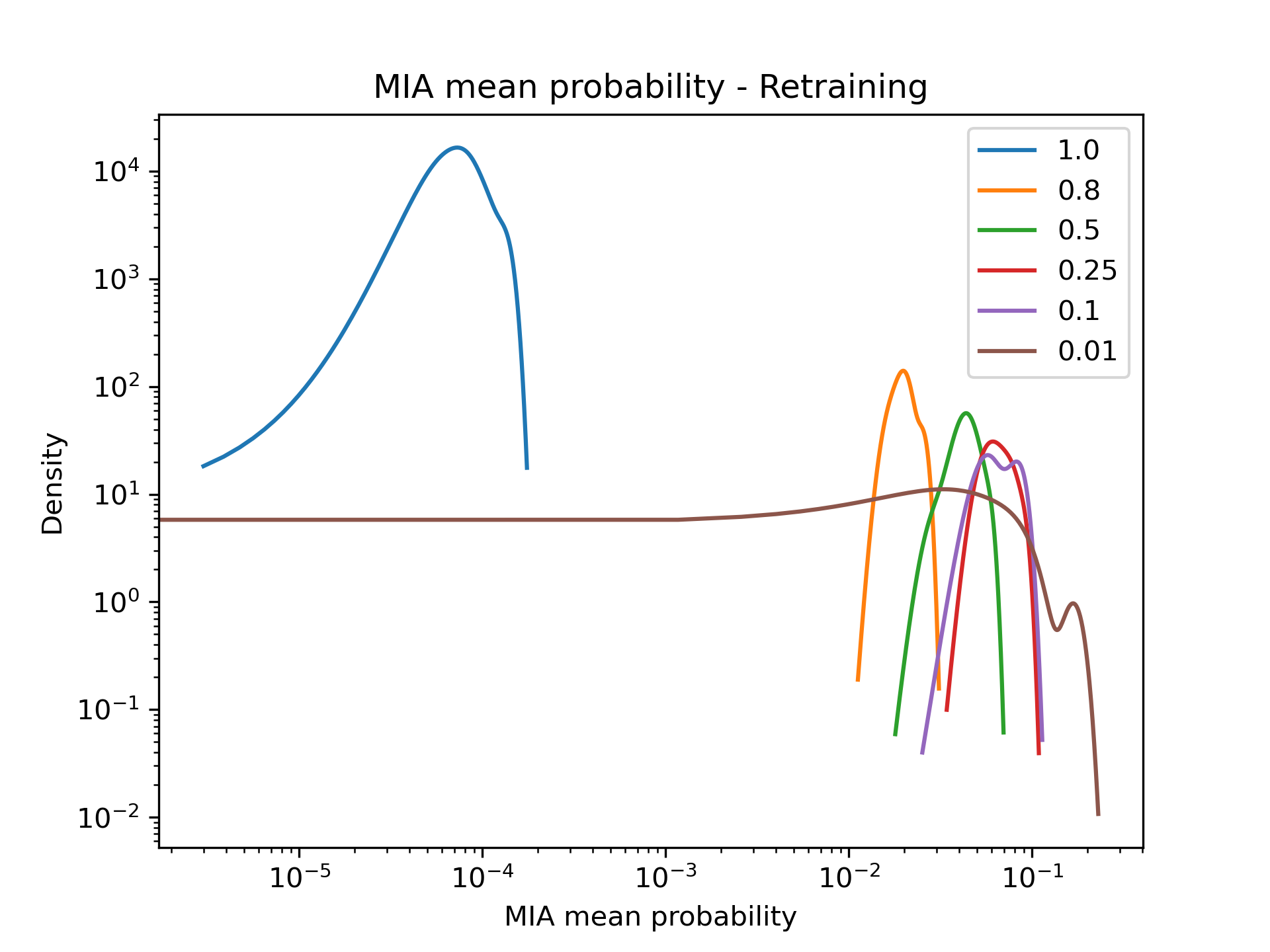}
		\caption{Retraining}
	\end{subfigure}
	\begin{subfigure}[b]{0.5\textwidth}
		\centering
		\includegraphics[scale=0.43]{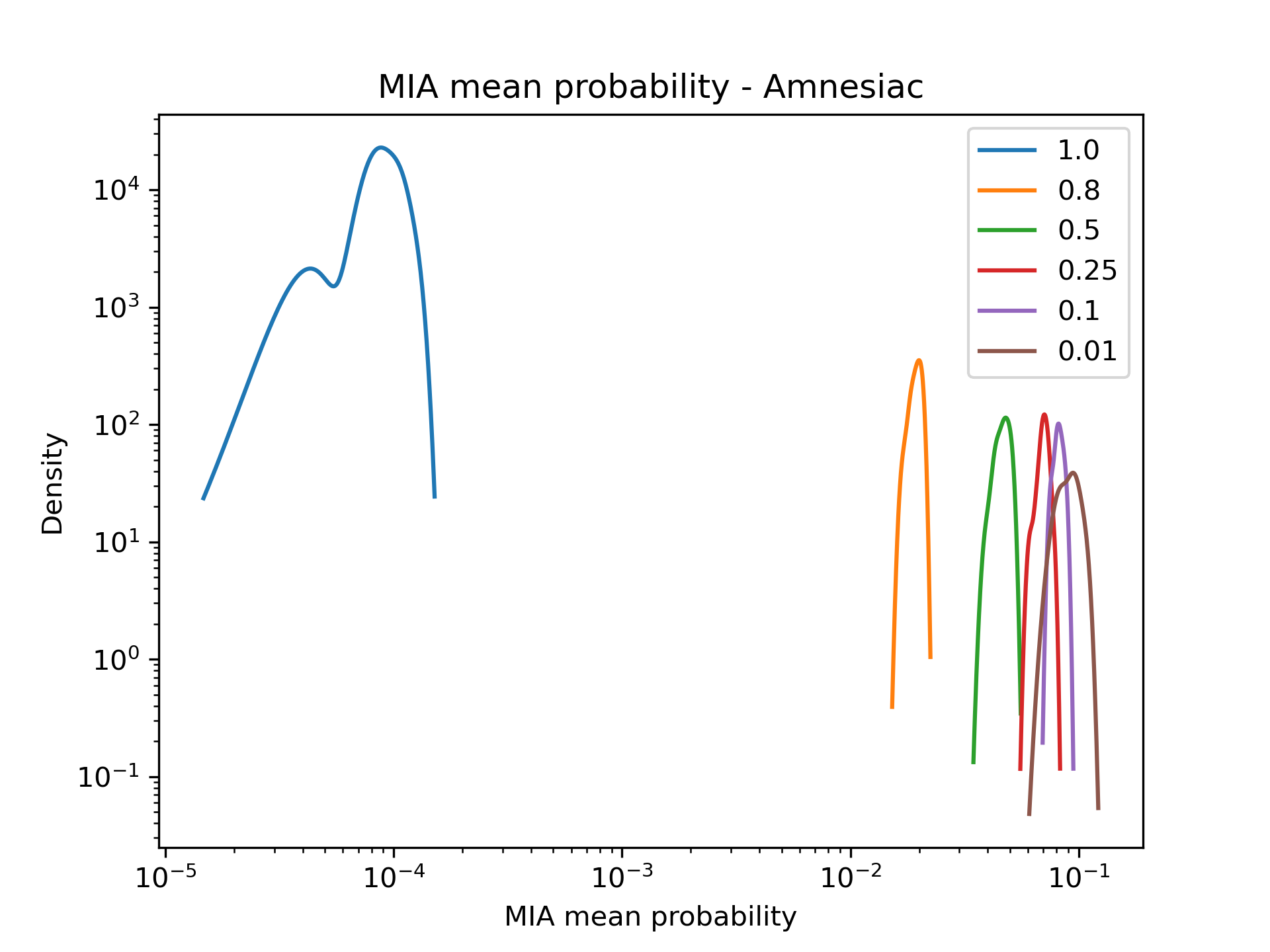}
		\caption{Amnesiac Unlearning}
	\end{subfigure}\hfill
	\begin{subfigure}[b]{0.5\textwidth}
		\centering
		\includegraphics[scale=0.43]{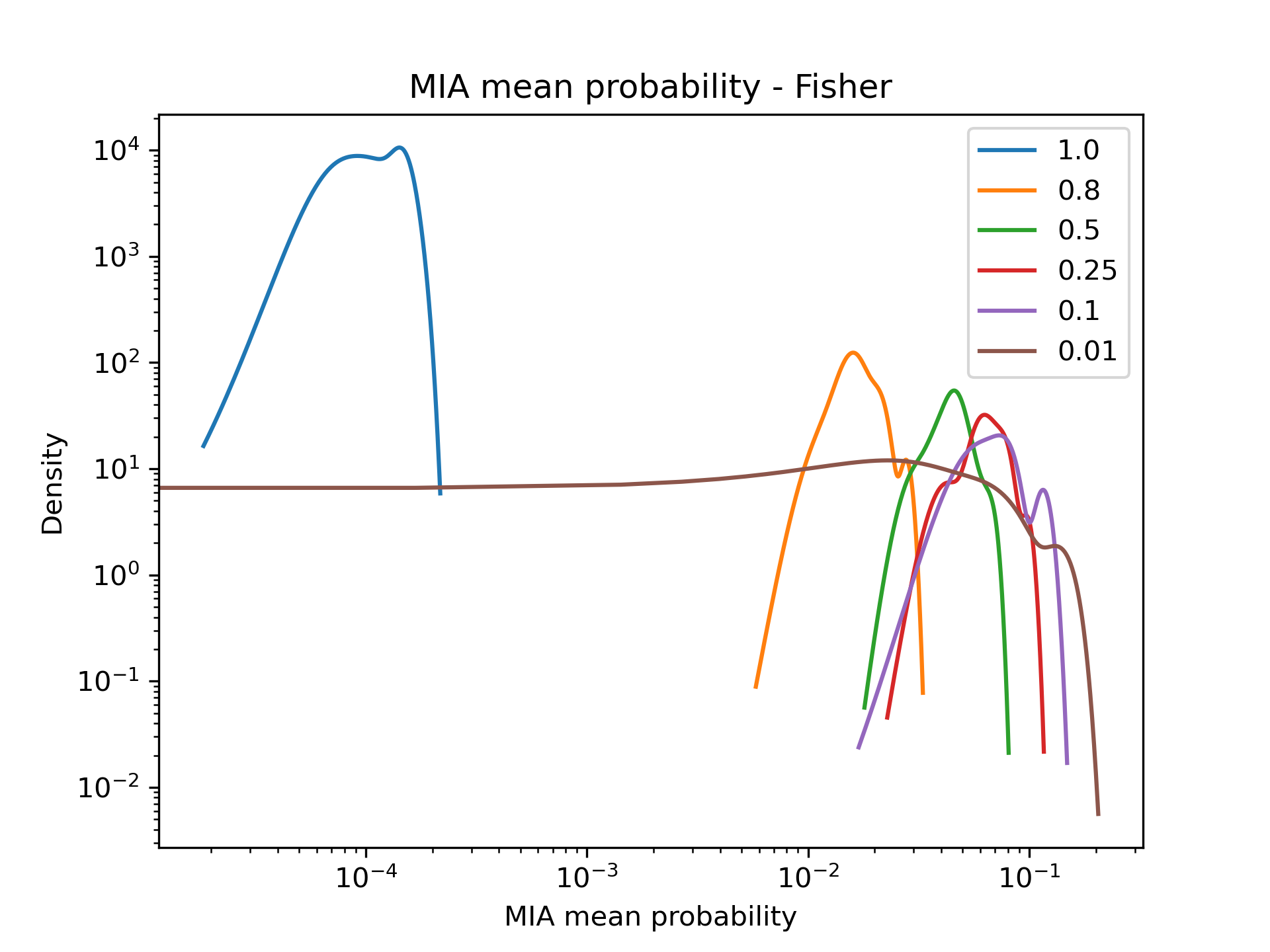}
		\caption{Fisher Forgetting}
	\end{subfigure}
	\caption{Distributions of membership inference attack mean probabilities over all pre-trained models trained on the CIFAR10 dataset (a) before and (b)-(d) after forgetting. Each distribution corresponds to a percentage of the target class. Both axes are log scaled.}
	\label{fig:mean_mia}
\end{figure}

\begin{figure}[ht]
	\centering
	\begin{subfigure}[b]{0.5\textwidth}
		\centering
		\includegraphics[scale=0.43]{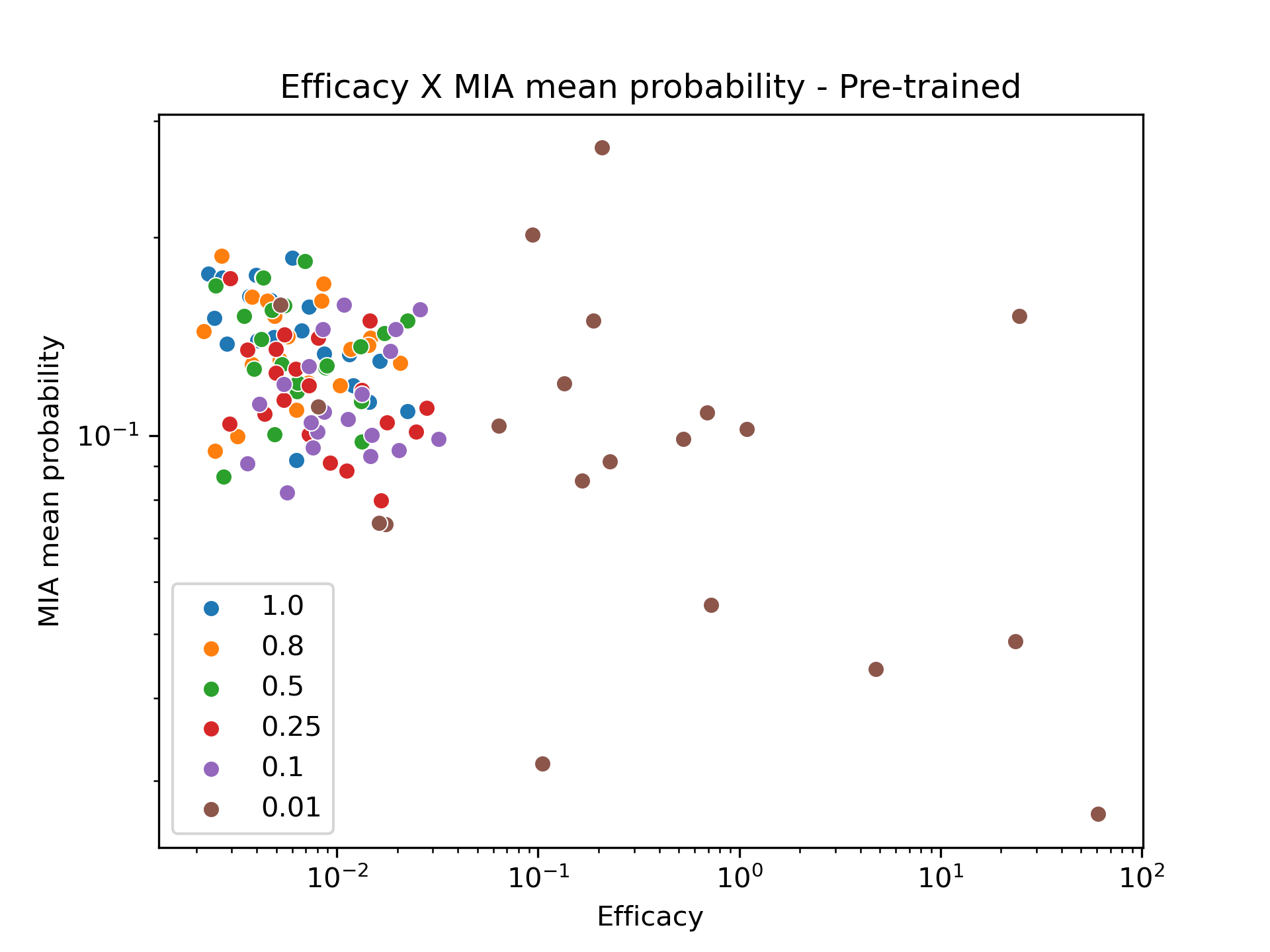}
		\caption{Pre-trained}
	\end{subfigure}\hfill
	\begin{subfigure}[b]{0.5\textwidth}
		\centering
		\includegraphics[scale=0.43]{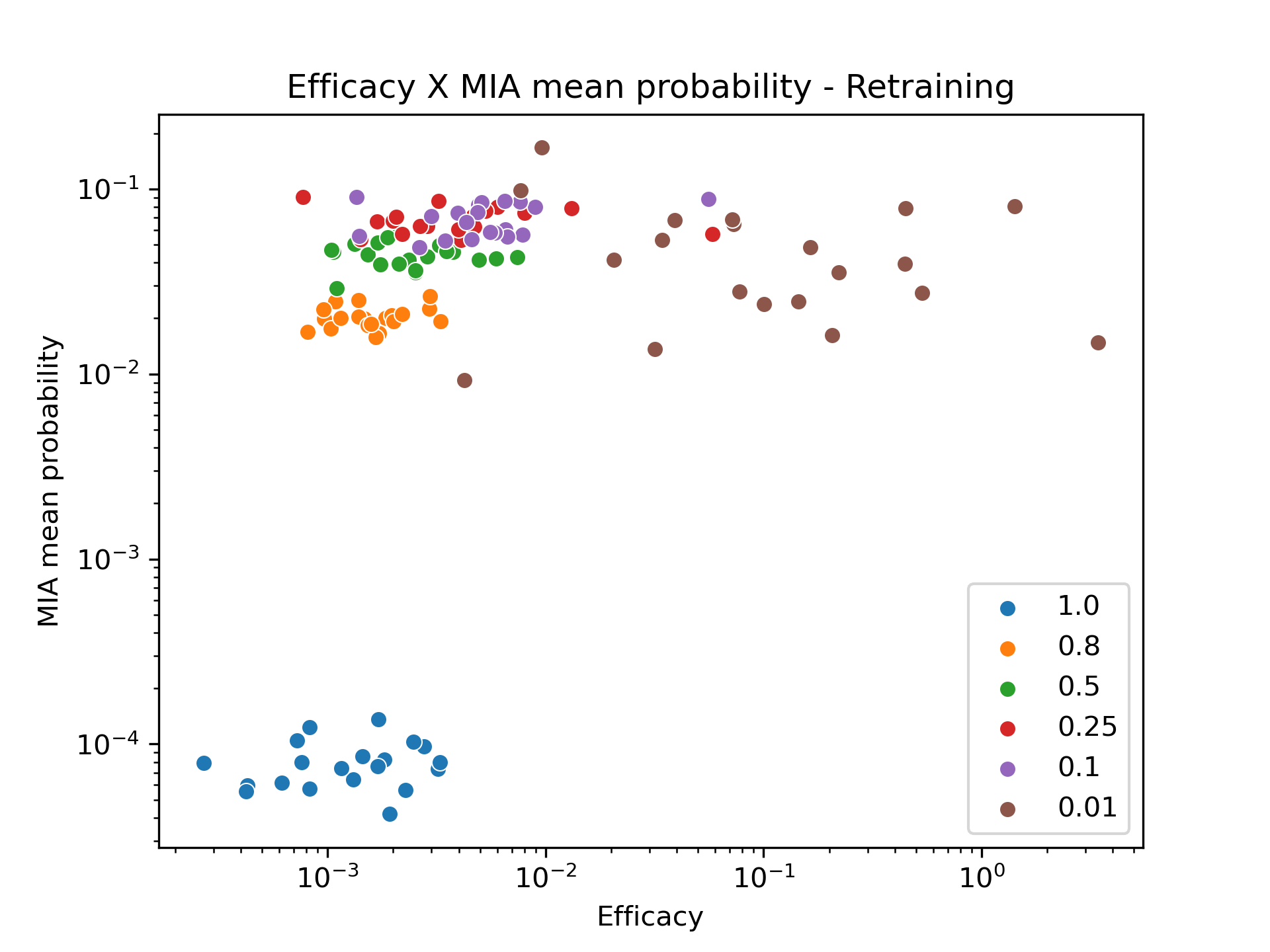}
		\caption{Retraining}
	\end{subfigure}
	\begin{subfigure}[b]{0.5\textwidth}
		\centering
		\includegraphics[scale=0.43]{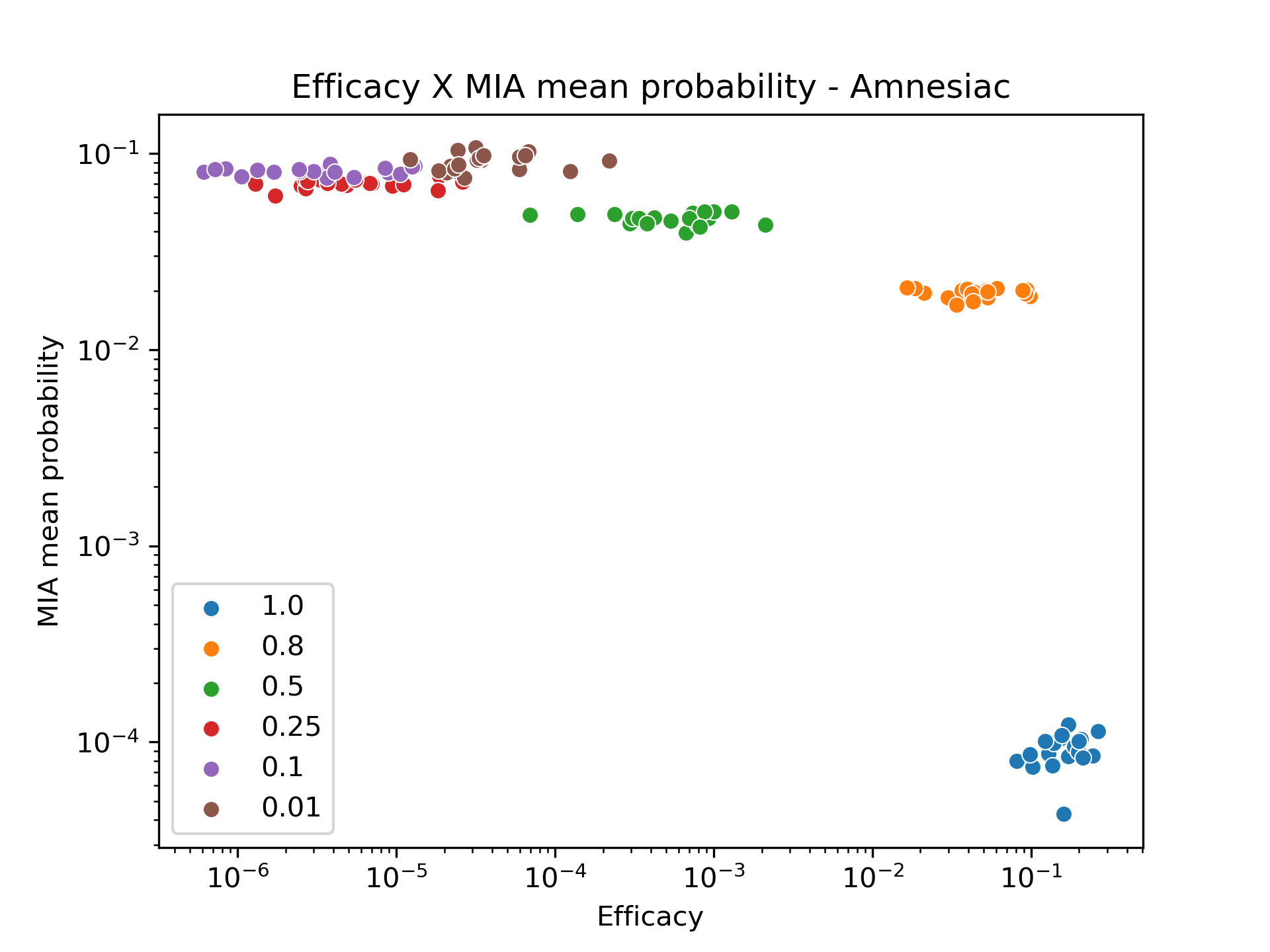}
		\caption{Amnesiac Unlearning}
	\end{subfigure}\hfill
	\begin{subfigure}[b]{0.5\textwidth}
		\centering
		\includegraphics[scale=0.43]{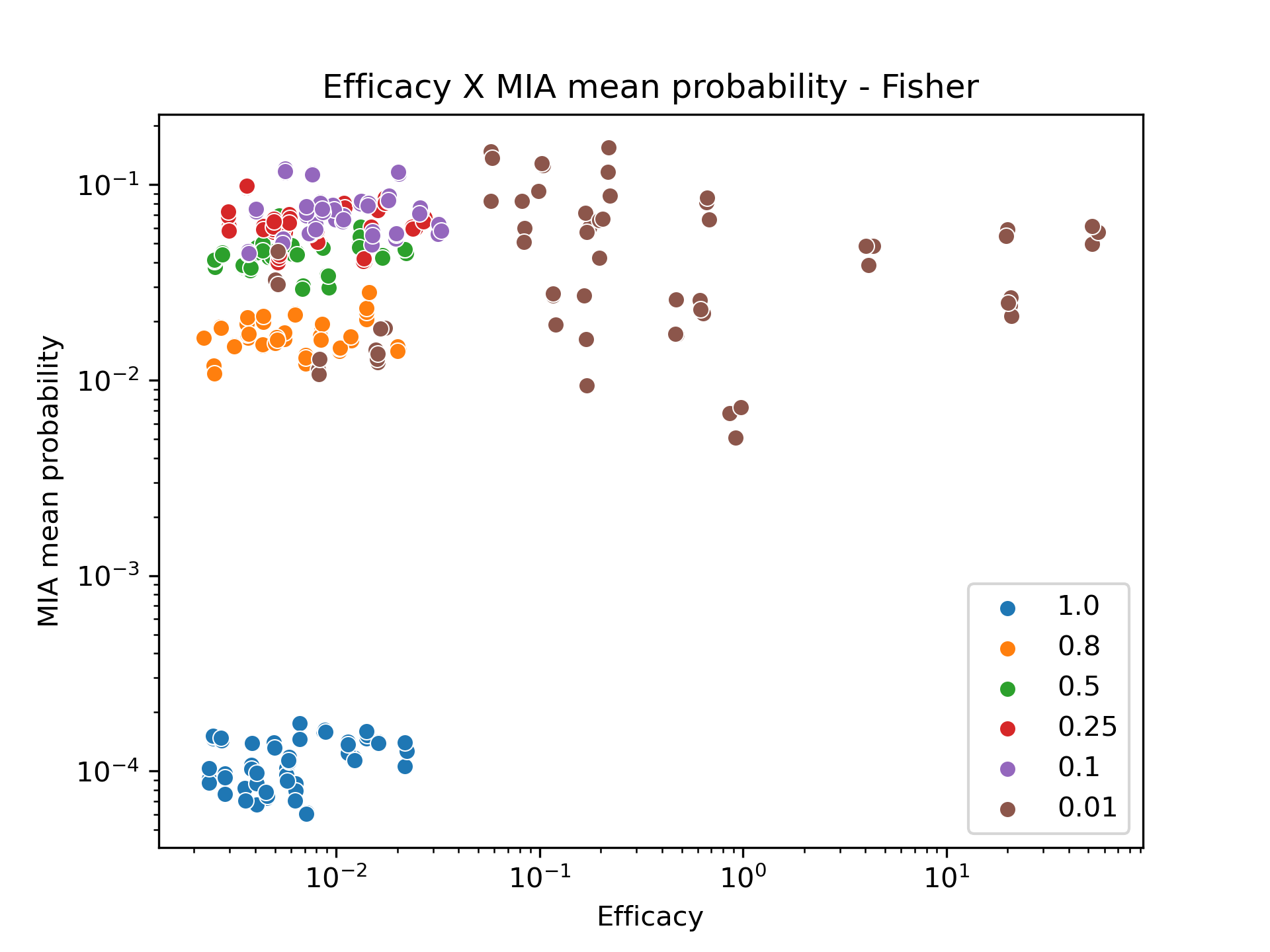}
		\caption{Fisher Forgetting}
	\end{subfigure}
	\caption{Log-log plot showing the relation between the efficacy and the membership inference attack mean probability (a) before and (b)-(d) after forgetting.}
	\label{fig:efficacy_mia}
\end{figure}

\end{document}